\documentclass[11pt]{article}
\usepackage[margin=1in]{geometry}

\usepackage[utf8]{inputenc}
\usepackage[T1]{fontenc}
\usepackage[sc]{mathpazo}

\RequirePackage{algorithm}
\RequirePackage{algorithmic}
\RequirePackage{natbib}

\usepackage{titling} % For better title control

\usepackage{graphicx, enumitem, mathrsfs} % Required for inserting images
\usepackage{booktabs} % for professional tables
\usepackage{lipsum}
\usepackage{hyperref}

\usepackage{amsmath}
\usepackage{amssymb}
\usepackage{mathtools}
\usepackage{amsthm}
\usepackage{arydshln}
\usepackage{caption}
\usepackage{subcaption}

\usepackage{thmtools}
\usepackage{thm-restate}
\usepackage{hyperref}
\usepackage{url}
\usepackage{xspace}
\usepackage{graphicx}
\usepackage{subcaption}
\usepackage{wrapfig}
\usepackage{booktabs}
\usepackage{amsthm}
\usepackage[capitalize,noabbrev]{cleveref}

\theoremstyle{plain}
\newtheorem{theorem}{Theorem}[section]
\newtheorem{proposition}[theorem]{Proposition}
\newtheorem{lemma}[theorem]{Lemma}
\newtheorem{corollary}[theorem]{Corollary}
\theoremstyle{definition}

\newtheorem{assumption}[theorem]{Assumption}
\theoremstyle{remark}
\newtheorem{remark}[theorem]{Remark}

\usepackage{xcolor, soul}

\definecolor{darkblue}{rgb}{0, 0, 1.0}

\newcommand{\name}{\textsc{GraphPDHG}\xspace}

\title{Neural Algorithmic Reasoning for Graph Saddle-Point Problems}
\author{Samantha Chen\textsuperscript{1,2,5}\\ 
\texttt{schen41@oberlin.edu}
\and
Jesse He\textsuperscript{3}\\
\texttt{jeh@ucsd.edu}
\and Coleman Clougherty\textsuperscript{3}\\
\texttt{cclougherty@ucsd.edu}
\and Gal Mishne\textsuperscript{3}\\
\texttt{gmishne@ucsd.edu}
\and Chester Holtz\textsuperscript{4, 5}\\
\texttt{chholtz@ucsd.edu}}
\date{}

\begin{document}

\maketitle
\renewcommand{\thefootnote}{\arabic{footnote}}
\footnotetext[1]{Corresponding author}
\footnotetext[2]{Department of Computer Science, Oberlin College}
\footnotetext[3]{Department of Data Science, University of California, San Diego}
\footnotetext[4]{OneChronos}
\footnotetext[5]{Work done while at the University of California, San Diego}

\begin{abstract}
  Neural algorithmic reasoning, or aligning a neural network with an algorithmic paradigm, has emerged as an approach to solving polynomial-time-solvable and computationally harder combinatorial optimization problems. 
    We propose a new message-passing framework based on the Chambolle-Pock Primal--Dual Hybrid Gradient (PDHG) method called 
    \name for solving general graph saddle-point problems.
    Theoretically, we show that \name can efficiently solve a family of graph saddle-point problems by simulating PDHG. We also show that our network can learn an accelerated PDHG algorithm. 
    Experimentally, we support our results on accelerated PDHG by evaluating the performance of our model as a learned warm start for second-order optimization techniques (SSNAL). 
    We also show that alignment with PDHG leads to stronger size generalization than non-aligned graph neural network (GNN) baselines. Overall, we propose a novel architecture for solving a general family of optimization problems on graphs.
\end{abstract}

\section{Introduction}
Data-dependent methods to learn better and faster approximations to difficult combinatorial optimization problems have become increasingly popular and important for downstream applications like chip design, scheduling, matching, and related applications~\cite{bengio2021machine}. Neural algorithmic reasoning (NAR), where neural networks are designed to imitate algorithmic control flows~\citep{velickovic2020neural,velickovic2021nar,velickovic2022clrs}, has emerged as a promising technique for improving the accuracy of such models.
In the context of graph-structured problems, neural algorithmic reasoning has also been shown to improve out-of-distribution generalization, or the ability of a network to adapt to test instances that are sampled from a different distribution than the training instances~\citep{cappart2023combinatorial,yang2021gnniterative,yau2024optgnn,nerem2025graphneuralnetworksextrapolate}.
Neural algorithmic reasoning has been shown to be especially effective with respect to size generalization, when the size of the test input graph is larger than the size of the training graphs. 
By biasing the model toward  an algorithm, one hopes that the algorithm will learn a generalizable procedure that will deliver robust results across different problem instances and additionally, avoid expensive handcrafted feature engineering.

In this paper, we leverage neural algorithmic reasoning to design efficient neural models for solving graph-structured (non)convex-(non)concave saddle-point problems, which appear across statistics, signal processing and machine learning~\citep{chambolle2011first, tibshirani2005fused, chen2015convexclustering, hein2011cut}.
These problems include many standard graph-structured problems, including graph partitioning, convex clustering, and $p$-Wasserstein distance between graphs~\citep{hein2011cut,pmlr-v108-ye20a,jung19,chen2015convexclustering,chi2015splitting,peyré2020computationaloptimaltransport,robertson2025needresistanceequivalenceeffective}, and are most commonly adopted in problems where one must trade off between a reconstruction error and a regularity penalty under constraints. 
We are particularly motivated by the Primal--Dual Hybrid Gradient (PDHG) method~\citep{chambolle2011first,PockChambolle2011DiagonalPrecond}.
Classical PDHG is an \emph{iterative} general-purpose solver, but its practical performance depends strongly on certain instance-dependent choices (e.g., step sizes, diagonal preconditioning, extrapolation/momentum, stopping criteria)~\citep{applegate2021pdlp,pmlr-v84-mollenhoff18a,pmlr-v108-ye20a}. 
In the context of graph saddle point problems, when the reconstruction error and regularity penalty admit certain separable structure and are ``simple'' in the sense of computing their associated proximal operators, 
% In the context of the problem family of \Cref{eq:mainP}, when $G$ and $F$ admit certain separable structure and are ``simple'' in the sense of computing their associated proximal operators, 
PDHG is particularly efficient and amenable to message passing.
In contrast to other first-order or second-order methods, such as ADMM~\citep{chi2015splitting} or semismooth Newton methods~\cite{sun2021ssnal}, which require solving expensive global subproblems, PDHG can take advantage of the graph structure and only requires simple local message passing updates.

In this paper, we leverage this message passing structure to design a PDHG-inspired message passing neural network (MPNN). 
Importantly, our neural-PDHG architecture allows us to learn algorithmic parameters to the typical spectra/conditioning and feature statistics of the instances at hand.
This is especially valuable when one must solve many related graph problems under tight compute budgets, when good preconditioners are expensive to design, or when one has additional supervision.

To design an efficient neural model to solve these graph saddle-point problems, we are motivated by the Primal--Dual Hybrid Gradient (PDHG) method~\citep{chambolle2011first,PockChambolle2011DiagonalPrecond}.
Classical PDHG is an \emph{iterative} general-purpose solver, but its practical performance depends strongly on certain instance-dependent choices (e.g., step sizes, diagonal preconditioning, extrapolation/momentum, stopping criteria)~\citep{applegate2021pdlp,pmlr-v84-mollenhoff18a,pmlr-v108-ye20a}. 
In the context of graph saddle point problems, when the reconstruction error and regularity penalty admit certain separable structure and are ``simple'' in the sense of computing their associated proximal operators, 
% In the context of the problem family of \Cref{eq:mainP}, when $G$ and $F$ admit certain separable structure and are ``simple'' in the sense of computing their associated proximal operators, 
PDHG is particularly efficient and amenable to message passing.
In contrast to other first-order or second-order methods, such as ADMM~\citep{chi2015splitting} or semismooth Newton methods~\cite{sun2021ssnal}, which require solving expensive global subproblems, PDHG can take advantage of the graph structure and only requires simple local message passing updates.

In this paper, we leverage this message passing structure to design a PDHG-inspired message passing neural network (MPNN). 
Importantly, our neural-PDHG architecture allows us to learn algorithmic parameters to the typical spectra/conditioning and feature statistics of the instances at hand.
This is especially valuable when one must solve many related graph problems under tight compute budgets, when good preconditioners are expensive to design, or when one has additional supervision.

\noindent \textbf{Contributions.}
We introduce \name, a message passing GNN scheme inspired by PDHG which can be used to solve general graph saddle-point problems.
In contrast to earlier work~\citep{numeroso2023dual,he2025pdnar,li2024pdhgnet} which targets algorithmic \emph{execution} for discrete primal--dual schemes in neural algorithmic reasoning, we focus on \emph{continuous} primal--dual optimization for graph-structured composite objectives defined on point clouds and graphs. Our contributions include:
\begin{itemize}
    \item \textbf{A message passing PDHG network.} We introduce PDHG-inspired graph neural network for graph-based saddle-point problems with convex and nonconvex penalties, which we call \name. \name is designed to carefully exploit primal--dual saddle structure, while avoiding costly inner solves, and yielding simple, scalable iterations based on sparse graph operators and proximal updates. In Section~\ref{sec:method}, we introduce a node-edge message-passing layer that operates locally on a graph and is tailored to graph-regularized objectives $G(u)+F(Ku)$. We show in Section~\ref{sec:method_universality} that a single layer of \name can implement a single step of preconditioned PDHG. 
    % This MPNN implementation of PDHG allows us to interpret the learned parameters as algorithmic choices. We refer to this network as \name. 
    We apply our architecture to solving convex clustering.% and also introduce a novel PDHG algorithm for convex clustering. 
    
    \item \textbf{\name and learned acceleration.} In Section~\ref{sec:method_linearization}, we analyze the linearized primal--dual error dynamics induced by the layer and show that in a memoryless regime the update reduces to a preconditioned graph-Laplacian dynamics (Proposition~\ref{prop:memorylessdynamics}). Alternatively, with edge memory, the recurrence recovers heavy-ball acceleration~\citep{Polyak1964SomeMO} (Proposition~\ref{prop:memorydynamics}).
    
    \item \textbf{Empirical performance.} We evaluate our model performance with respect to the convex clustering problem. We show that our PDHG-based model produces high-quality warm starts for both PDHG and second-order optimization methods. We further demonstrate that \name generalizes better than GNN baselines to test graphs that are larger than those seen during training.
\end{itemize}
\paragraph{Related work.} 
We give more exposition on the existing literature in the appendix and restrict our attention here to the most relevant prior work. There is a large body of prior work in understanding how neural networks solve a variety of optimization problems from both supervised \cite{georgiev2024neural} and unsupervised approaches \cite{karalias2020erdos} to mixed neural-algorithmic pipelines \cite{kahng2024nn, mccarty2021nn}. The problems considered range from classical combinatorial optimization problems such as max-independent set and minimum dominating set \cite{loukas2019graph, sato2021random} to semidefinite programs~\cite{yau2024optgnn}. 
One of the earliest works to connect neural algorithmic reasoning with duality is \cite{numeroso2023dual}, where they develop dual algorithmic reasoning frameworks for solving min-cut/max-flow and shortest-path problems.
More recently, PDNAR \cite{he2025pdnar} and PDHG-net \cite{li2024pdhgnet} develop neural frameworks aligned with primal--dual algorithms, focusing on minimum hitting set and linear programming problems, respectively. 
In contrast to previous work, our model is explicitly designed to handle graph-based optimization problems. 
% Rather than relying only on the implicit structure of the optimization problem, we explicitly represent the learned primal variables as node features and learned dual variables as edge features. 
% Furthermore, the class of problems we consider is broader than linear programming and includes linear, nonlinear, and nonsmooth graph-structured saddle-point problems. 
We consider a broader class of problems which includes linear, nonlinear, and nonsmooth graph-structured saddle-point problems.
Finally, we differ from existing algorithmic reasoning works as \name is trained end-to-end to minimize the primal--dual gap in an \emph{unsupervised manner}, without the need for intermediate supervision or step-by-step algorithmic guidance. 

\section{Preliminaries} 

\paragraph{Graph neural networks.}
Graph neural networks (GNNs) are neural architectures for graph-structured data~\citep{scarselli2009graph}. Let $\mathcal{G}=(V,E,w)$ be a weighted graph with node features $X=\{x_i \in \mathbb{R}^d\}_{i\in V}$ and, optionally, edge features $e=\{e_{(i,j)}\}_{(i,j)\in E}$. Intuitively, a message-passing GNN computes new node features at each layer by aggregating information from its neighbors~\citep{gilmer2017neural,jegelka2022theory}. To be exact, in a generic layer, the feature for a node $i$ computed at each layer is
\begin{align}
    h_i^{(\ell+1)}
    &=
    f_{\mathrm{up}}^{(\ell)}
    \left(h_i^{(\ell)},f_{\mathrm{agg}}^{(\ell)}
    \left(
        \left\{
            h_j^{(\ell)}, e_{(i,j)}, w_{ij}
            : j\in\mathcal{N}(i)
        \right\}
    \right)\right),
\end{align}
where $\mathcal{N}(i)$ denotes the neighbors of $i$ and $h_i^{(\ell)}$ is the learned node feature. 
Since the aggregation and update maps are shared across vertices, a GNN layer can be applied to graphs with different sizes and topologies.
% Standard architectures such as graph convolutional networks (GCNs)~\citep{kipf2017semi} and graph attention networks (GATs)~\citep{velickovic2018graph} correspond to particular choices of these aggregation and update functions.

\paragraph{Graph saddle-point problems. }
We consider problems of the form
\begin{equation}\label{eq:main_P}
    \min_{u \in \mathbb{R}^{|V|\times d}}
    \left \{ G(u) + F(Ku) \right \},
\end{equation}
where the model is formulated on an undirected graph $\mathcal{G} = (V,E,w)$ with positive weights $w : E \to \mathbb{R}_{+}$. $G$ is commonly a function on vertices and $F$ is a function on edges. In the canonical graph-structured setting, $K$ is a weighted incidence operator. Fix an arbitrary orientation of each edge $e=(i,j)\in E$ and define
$
    (Ku)_e = \sqrt{w_e} (u_i - u_j).
$
Writing $\nabla$ for the (signed) incidence matrix of $\mathcal{G}$, one can compactly express
$
    K = \operatorname{diag}(\sqrt{w})\nabla,\:\: \forall e=(i,j) \in E
$
so that $Ku$ collects weighted differences across edges.

An equivalent way to express \eqref{eq:main_P} is via its saddle-point (primal–dual) formulation. The Fenchel duality framework gives the equivalent saddle problem
\begin{equation}\label{eq:saddle_P}
    \min_{u \in \mathbb{R}^{|V| \times d}} \max_{p \in \mathbb{R}^{|E| \times d}}
    \left\{G(u) + \langle Ku, p\rangle - F^{*}(p)\right\},
\end{equation}
where $F^*$ denotes the convex conjugate of $F$, and $\langle Ku, p\rangle = \sum_{e\in E} \langle (Ku)_e, p_e\rangle$.
Under standard qualification conditions, \eqref{eq:saddle_P} has the same optimal value as \eqref{eq:main_P}, and saddle points $(u^\star,p^\star)$ satisfy the KKT system associated with the original problem.
\begin{remark}[Primal--dual gap]
    Given the objective $f(u) = G(u) + F(Ku)$, the dual objective is $d(p) = -G^*(-K^\top p) - F^*(p)$, where $G^*$ and $F^*$ are convex conjugates. The primal--dual gap is
    \begin{equation}\label{eq:pdg}
        \begin{aligned}
            &\text{Gap}(u,p) = f(u) - d(p) = G(u) + F(Ku) + G^*(-K^\top p) + F^*(p) \geq 0   
        \end{aligned}
    \end{equation}
    Under the strong duality condition, $\text{Gap}(u^*, p^*) = 0$ iff $u^*$ and $p^*$ are the optimal primal and dual variables respectively. Otherwise, the gap is strictly positive and weak duality holds. 
\end{remark}

\paragraph{Convex clustering.}
Classical convex clustering (and similarly Network Lasso) of points $x_1,\dots,x_n \in \mathbb{R}^d$ can be written as a special case of \eqref{eq:main_P}. We illustrate this in the Appendix in Table~\ref{tab:notation}. Let the graph $\mathcal{G}=(V,E,w)$ have vertices $V = \{1,\dots,n\}$, edge weights $w_e > 0$ for $e=(i,j)\in E$. Define
\begin{align*}
    G(u) &= \frac{1}{2}\sum_{i\in V} \|u_i - x_i\|_2^2, \quad
    F(z) = \sum_{e\in E} \lambda_e \|z_e\|_2,
\end{align*}
where $\lambda_e := \lambda \sqrt{w_e}$. Then \eqref{eq:main_P} becomes
\begin{equation}\label{eq:convexclustering}
    \min_{u \in \mathbb{R}^{|V|\times d}} \frac12\sum_{i\in V}\|u_i - x_i\|_2^2
           + \lambda \sum_{(i,j)\in E} w_{ij} \|u_i - u_j\|_2,
\end{equation}
See the Appendix for more details.
% \begin{figure}[t]
%     \centering
%     \includegraphics[width=0.995\linewidth]{figures/convex-clustering.png}
%     \caption{Example of convex clustering on a $k$-nearest neighbor graph for hierarchical Gaussian blobs. We show the solutions to convex clustering ~\eqref{eq:convexclustering} as the fusion weight $\lambda$ increases. \sam{Move to appendix}}
%     \label{fig:convexclustering}
% \end{figure}

\paragraph{Primal--Dual Hybrid Gradient (PDHG).}
% The saddle-point formulation \eqref{eq:saddle_P} is naturally compatible with a node-edge representation of graph optimization. The primal variable $u_i\in\mathbb{R}^d$ lives on vertex $i$, while the dual variable $p_e\in\mathbb{R}^d$ lives on edge $e$. The incidence operator $K$ maps node states to edge differences, while its transpose $K^\top$ maps edge variables back to vertices by aggregating the signed incident-edge contributions.
The preconditioned Primal--Dual Hybrid Gradient (PDHG) method is a popular framework for solving the problem in \eqref{eq:saddle_P}. 
Generic PDHG iterations alternate between these two local operations. 
First, the current node variables are compared across each edge, and the corresponding edge dual variable is updated by a proximal or projection step. 
Second, each node aggregates the updated dual variables on its incident edges and uses this aggregate to update its own primal variable.
Thus, PDHG has the same computational pattern as node--edge message passing: node states send differences to edges, and edge states send corrective messages back to nodes.
More concretely, a general formulation of the PDHG iterations is given as follows:
\begin{align}
    &u^{(k+1)} = \arg\min_{u\in \mathbb{R}^{|V| \times d}} G(u) + \langle p^{(k)}, Ku \rangle + \frac{s}{2} \lVert u - u ^{(k)} \rVert_S^2 \label{eq:pdhg-u}\\
    &p^{(k+1)} = \arg\min_{p\in \mathbb{R}^{|E| \times d}} F^{*}(p) - \langle K(2u^{(k+1)} - u^{(k)}),\:\: p \rangle + \frac{t}{2} \lVert p - p^{(k)}\rVert_T^2 \label{eq:pdhg-p}  
\end{align}
Here, $S \in \mathbb{R}^{|V| \times |V|}$ and $T \in \mathbb{R}^{|E| \times |E|}$ are symmetric positive definite matrices, $s,t > 0$ and the norm $\lVert \cdot \rVert_A$ for some matrix $A$ of a vector $u$ is defined by the quadratic $\lVert u \rVert^2_A = u^\top A u$. 
For a given $S$ and $T$, the convergence of PDHG is guaranteed if the (inverse) step sizes $s$ and $t$ satisfy 
$st > \lVert T^{-1/2} KS^{-1/2}\rVert^2.$
One key feature of PDHG is that the updates described in \eqref{eq:pdhg-u} and \eqref{eq:pdhg-p} become fully local under certain separability assumptions on $G$ and $F$ and their associated proximal maps. As we will demonstrate, this observation is the main algorithmic reason for using PDHG as the template for \name. 
While classical PDHG fixes the proximal maps, step sizes, preconditioners, and extrapolation rule, in \name, we allow certain parts of the update to be learned, so that the network can adapt its effective preconditioning and acceleration to a family of graph topologies and feature statistics of a problem distribution.

% \sam{Notice that (7) and (8) together are purely local operations (along the edges of the graph) and distributed on the nodes. Therefore, it is amenable to the message passing formulation described above. This observation is the main algorithmic reason for using PDHG as the template for \name. While classical PDHG fixes the proximal maps, step sizes, preconditioners, and extrapolation rule, in \name, we allow certain parts of the update to be learned, so that the network can adapt its effective preconditioning and acceleration to a family of graph topologies and feature statistics of a problem distribution.}
% \chester{As we previously mentioned, one key feature of PDHG is that these updates become fully local under certain separability assumptions on $G$ and $F$ and their associated proximal maps. As we will demonstrate, this observation is the main algorithmic reason for using PDHG as the template for \name. While classical PDHG fixes the proximal maps, step sizes, preconditioners, and extrapolation rule, in \name, we allow certain parts of the update to be learned, so that the network can adapt its effective preconditioning and acceleration to a family of graph topologies and feature statistics of a problem distribution.}

\begin{theorem}[\cite{PockChambolle2011DiagonalPrecond}]\label{thm:pdhg_convergence}
    Assume $G$ and $F$ are proper, lower semicontinuous, convex, and that \eqref{eq:saddle_P} admits a saddle point
    $(u^\star,p^\star)$. Let $(u^{(k)},p^{(k)})$ be generated by \eqref{eq:pdhg-u} and \eqref{eq:pdhg-p} such that
    $$
        st > \bigl\|T^{-1/2}KS^{-1/2}\bigr\|_2^2.
    $$
    Let $\bar u^{(N)}:=\frac1N\sum_{k=1}^N u^{(k)}$ and
    $\bar p^{(N)}:=\frac1N\sum_{k=1}^N p^{(k)}$. Then for all $N\ge 1$,
    \begin{align*}
        &\mathcal L(\bar u^{(N)},p^\star)-\mathcal L(u^\star,\bar p^{(N)}) \le
        \frac{1}{N}\left(\frac{s}{2}\|u^{(0)}-u^\star\|_{S}^{2}
        +\frac{t}{2}\|p^{(0)}-p^\star\|_{T}^{2}\right),
    \end{align*}
    where $\mathcal L(u,p):=G(u)+\langle Ku,p\rangle-F^*(p)$, i.e., the primal--dual gap converges as $\mathcal O(1/N)$.
\end{theorem}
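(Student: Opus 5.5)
The argument follows the standard Chambolle--Pock ergodic analysis: a one-step inequality from the optimality conditions of the two proximal subproblems, followed by telescoping. The metrics are $sS$ on the primal side and $tT$ on the dual side.

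\textbf{Step 1: one-step inequality.} Write the first-order conditions of \eqref{eq:pdhg-u} and \eqref{eq:pdhg-p}:
\[
sS(u^{(k)}-u^{(k+1)}) - K^\top p^{(k)} \in \partial G(u^{(k+1)}),
\]
\[
tT(p^{(k)}-p^{(k+1)}) + K(2u^{(k+1)}-u^{(k)}) \in \partial F^*(p^{(k+1)}).
\]
Apply the subgradient inequality for $G$ at a test point $u$ and for $F^*$ at a test point $p$, and add the two. The polarization identity $2\langle a-b,S(b-c)\rangle=\|a-c\|_S^2-\|a-b\|_S^2-\|b-c\|_S^2$ turns the proximal terms into squared distances. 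The result should be
\begin{align*}
\mathcal L(u^{(k+1)},p)-\mathcal L(u,p^{(k+1)}) \le{}& \tfrac{s}{2}\|u-u^{(k)}\|_S^2-\tfrac{s}{2}\|u-u^{(k+1)}\|_S^2+\tfrac{t}{2}\|p-p^{(k)}\|_T^2-\tfrac{t}{2}\|p-p^{(k+1)}\|_T^2\\
&-\tfrac{s}{2}\|u^{(k+1)}-u^{(k)}\|_S^2-\tfrac{t}{2}\|p^{(k+1)}-p^{(k)}\|_T^2\\
&+\langle K(u^{(k+1)}-u^{(k)}),\,p^{(k+1)}-p\rangle-\langle K(u^{(k+1)}-u^{(k)}),\,p^{(k)}-p\rangle .
\end{align*}
The two bilinear cross terms come from the extrapolation $2u^{(k+1)}-u^{(k)}$ and from the use of the lagged dual $p^{(k)}$ in the primal step.

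\textbf{Step 2: the cross term (main obstacle).} Handling the cross term is the delicate part. I would not bound it at each step; instead I would keep the form $\langle K(u^{(k+1)}-u^{(k)}),p^{(k+1)}-p\rangle$ minus its lagged counterpart $\langle K(u^{(k)}-u^{(k-1)}),p^{(k)}-p\rangle$ so that it telescopes. This leaves an interior remainder $\langle K(u^{(k)}-u^{(k-1)}),p^{(k+1)}-p^{(k)}\rangle$. Set $L:=\|T^{-1/2}KS^{-1/2}\|$. Cauchy--Schwarz in the weighted norms and Young's inequality give
\[
|\langle K a,b\rangle|\le L\|a\|_S\|b\|_T\le \tfrac{L\sqrt{t/s}}{2}\|a\|_S^2\cdot\tfrac{s}{\sqrt{st}}+\cdots,
\]
which simplifies to $|\langle Ka,b\rangle|\le \tfrac{L}{\sqrt{st}}\bigl(\tfrac{s}{2}\|a\|_S^2+\tfrac{t}{2}\|b\|_T^2\bigr)$. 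Because $st>L^2$, the factor $L/\sqrt{st}$ is less than $1$, so the negative squared step-length terms from Step 1 absorb the remainder.

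\textbf{Step 3: sum and average.} Sum the inequality over $k=0,\dots,N-1$ with initialization $u^{(-1)}=u^{(0)}$, which matches the $u^{(1)},\dots,u^{(N)}$ indexing of the statement. The distance terms telescope and the leftover boundary cross term $\langle K(u^{(N)}-u^{(N-1)}),p^{(N)}-p\rangle$ is absorbed by the final $-\tfrac{s}{2}\|u^{(N)}-u^{(N-1)}\|_S^2-\tfrac{t}{2}\|p-p^{(N)}\|_T^2$ terms, again using $L/\sqrt{st}<1$. This yields
\[
\sum_{k=1}^N\bigl[\mathcal L(u^{(k)},p)-\mathcal L(u,p^{(k)})\bigr]\le \tfrac{s}{2}\|u-u^{(0)}\|_S^2+\tfrac{t}{2}\|p-p^{(0)}\|_T^2 .
\]
Finally, $\mathcal L$ is convex in $u$ and concave in $p$ (here we use that $G$ and $F^*$ are convex and lower semicontinuous), so Jensen's inequality gives $\mathcal L(\bar u^{(N)},p)-\mathcal L(u,\bar p^{(N)})\le \frac1N\sum_{k=1}^N[\cdots]$. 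Choosing $(u,p)=(u^\star,p^\star)$ gives the claimed bound. The left side is nonnegative by the saddle-point property, which justifies reading it as an $\mathcal O(1/N)$ gap.
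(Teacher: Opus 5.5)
\textbf{Step 1 is wrong for this update order.} Adding the subgradient inequalities from \eqref{eq:pdhg-u}--\eqref{eq:pdhg-p}, the bilinear terms actually collect into
\[
\langle K(u^{(k+1)}-u^{(k)}),\,p^{(k+1)}-p\rangle+\langle K(u^{(k+1)}-u),\,p^{(k+1)}-p^{(k)}\rangle
=W_{k+1}-W_k+\langle K(u^{(k+1)}-u^{(k)}),\,p^{(k+1)}-p^{(k)}\rangle ,
\]
with $W_k:=\langle K(u^{(k)}-u),\,p^{(k)}-p\rangle$. Your two cross terms collapse to the last summand only, so $W_{k+1}-W_k$ is lost. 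On the example below, at $k=0$ and $(u,p)=(0,1)$, your inequality reads $1\le 1-\tfrac{\sqrt2}{2}$.

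Your Steps 2--3 use a different telescoping: the lagged term $\langle K(u^{(k)}-u^{(k-1)}),p^{(k)}-p\rangle$, the remainder $\langle K(u^{(k)}-u^{(k-1)}),p^{(k+1)}-p^{(k)}\rangle$, and $u^{(-1)}=u^{(0)}$. That is the correct bookkeeping for the original Chambolle--Pock ordering. There the dual step comes first with extrapolation $2u^{(k)}-u^{(k-1)}$, the cross term is $\langle K(2u^{(k)}-u^{(k-1)}-u^{(k+1)}),\,p^{(k+1)}-p\rangle$, and your argument gives exactly the stated constant.

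The iteration in the statement needs no $u^{(-1)}$. For it, $W_N$ and the local terms are absorbed as you describe, but the sum leaves $-W_0=-\langle K(u-u^{(0)}),\,p-p^{(0)}\rangle$, which has no sign. What you actually obtain is
\[
\mathcal L(\bar u^{(N)},p^\star)-\mathcal L(u^\star,\bar p^{(N)})\le\frac{1}{2N}\bigl\|(u^\star,p^\star)-(u^{(0)},p^{(0)})\bigr\|_M^2,
\qquad
M=\begin{pmatrix} sS & -K^\top\\ -K & tT\end{pmatrix}.
\]
This is the stated right-hand side plus $-\frac1N\langle K(u^\star-u^{(0)}),\,p^\star-p^{(0)}\rangle$, hence at most $1+\|T^{-1/2}KS^{-1/2}\|/\sqrt{st}$ times the stated bound.

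\textbf{The gap cannot be closed: for the iteration as written, the claimed constant is false.} Take the following data.
\begin{itemize}
\item $K=S=T=1$ on $\mathbb R$ and $s=t=\sqrt2$, so $st=2>1$.
\item $G=\iota_{[0,\infty)}$ and $F^*\equiv0$, i.e.\ $F=\iota_{\{0\}}$.
\item The saddle points are $(0,P)$ with $P\ge0$; take $p^\star=1$.
\end{itemize}
From $(u^{(0)},p^{(0)})=(1,0)$ the iterates are $(1,\tfrac1{\sqrt2})$, $(\tfrac12,\tfrac1{\sqrt2})$, $(0,\tfrac1{2\sqrt2})$, and they are stationary from then on. For every $N\ge2$ the left side is $\mathcal L(\bar u^{(N)},1)-\mathcal L(0,\bar p^{(N)})=\bar u^{(N)}=\tfrac{3}{2N}$. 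The right side is $\tfrac{\sqrt2}{N}<\tfrac{3}{2N}$, so the claimed bound fails, while the $M$-bound $(1+\sqrt2)/N$ does hold.

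The paper gives no proof, only a citation, so the fix has to be in the statement. Either use the dual-first ordering and redo Step 1 for it, after which your Steps 2--3 go through. Or keep \eqref{eq:pdhg-u}--\eqref{eq:pdhg-p} and replace the constant by the $M$-norm one.
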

%The following corollary also demonstrates that the Gap serves as an upper bound on the solution error. 
The above theorem also motivates our unsupervised training objective. In particular, in Corollary~\ref{cor:strong_convexity_gap_to_distance} in the Appendix, we show that when $G$ is strongly convex, the primal--dual gap bounds the distance from the primal iterate to the unique primal optimizer. Thus, even without access to ground-truth solutions or intermediate PDHG iterates, minimizing the gap controls the iterate error. This is one important advantage of \name over recent work in NAR, which generally require intermediate layer hints or algorithmic supervision.

\section{Message passing PDHG} \label{sec:method}

\iffalse
\begin{table}[tb]
\caption{Examples of separable convex and nonconvex penalties. We derive associated proximal and $\mathsf{NORMALIZE}$ maps in the Appendix.}\label{tab:edgepenalties}
    \begin{center}
        \small
        \setlength{\tabcolsep}{6pt}
        \renewcommand{\arraystretch}{1.15}
        \begin{tabular}{@{}p{}p{}@{}}
            \toprule
            \textbf{Name} & \textbf{Penalty $F_e(z)$} \\
            \midrule
            \multicolumn{2}{@{}l}{\textbf{Convex penalties}}\\
            \addlinespace[2pt]
            Group TV ($\ell_2$) & $\lambda_e\|z\|_2$ \\
            Anisotropic TV ($\ell_1$) & $\lambda_e\|z\|_1$ \\
            Max-norm TV ($\ell_\infty$) & $\lambda_e\|z\|_\infty$ \\
            \addlinespace[3pt]
            \multicolumn{2}{@{}l}{\textbf{Nonconvex penalties}}\\
            \addlinespace[2pt]
            $\ell_0$ & $\lambda_e\mathbf{1}\{z\neq 0\}$ \\
            Clipped $\ell_1$\citep{zhang2010clippedlasso} & $\lambda_e\sum_{j=1}^d \min(|z_j|,t)$ \\
            SCAD \citep{fan2001scad} & $\lambda_e\mathrm{SCAD}_{a}(\|z\|_2)$ \\
            MCP \citep{zhang2010mcp} & $\lambda_e\mathrm{MCP}_{\gamma}(\|z\|_2)$ \\
            \bottomrule
        \end{tabular}
    \end{center}
\end{table}
\fi
%\begin{wraptable}[15]{r}{0.5\textwidth}

%
%
The PDHG formulation (\eqref{eq:saddle_P}) accommodates a broad range of edge penalties. In the separable setting, where
$
    F(z)=\sum_{e\in E} F_e(z_e),
$
the choice of $F_e$ specifies the coupling prior across edges. In the Appendix, we list a few examples of separable convex and nonconvex edge penalties with simple proximal operators.

We now present our framework using a GNN to simulate the general primal--dual approximation algorithm, representing the primal--dual as node and edge features, respectively.

%\subsection{\name architecture} \label{sec:method_architecture}
%\begin{figure*}[!t]
%    \centering
%    \includegraphics[width=0.65\linewidth]{figures/PDHG-Graph.png}
%    \caption{Overview of GNN feature propagation and the \name layer architecture}
%    \label{fig:placeholder}
% \end{figure*}

We adopt the encoder-processor-decoder architecture from~\cite{hamrick2018relational} implemented in the neural algorithmic reasoning pipeline~\cite{velickovic2021nar}.
In this framework, the \emph{processor} is taken to be \name, operating on the latent space learned by an \emph{encoder}. The \emph{decoder} reconstructs the final representation of the solution from the latent space. The framework is trained end-to-end to minimize the \emph{primal--dual gap}. 
In this framework, the processor network is meant to execute a general numerical algorithm
%, or do most of the work of the network, 
while the encoder and decoder are simple layers used to transform the data to and from the latent representations required by the processor module to natural outputs.
Note that although only the processor part of the pipeline uses our \name architecture, we still refer to the entire encoder-processor-decoder with a \name processor as \name.

\paragraph{Encoder. } The encoder architecture of \name consists of two MLP encoders, $f_n$ and $f_e$, which encode the node and edge inputs. These encoders transform the feature inputs into a high-dimensional latent space for the processor:
$$
    x_0 = f_n(x),\quad e^{(0)} = f_e(e)
$$
\paragraph{Processor. } A single layer of the processor $\Phi_\theta$ is defined by 
$$
    (h^{(\ell+1)}, e^{(\ell + 1)}) = \Phi_{\theta}(h^{(\ell)}, e^{(\ell)}; \mathcal{G}, X_0).
$$
We define a message passing layer for PDHG as follows. 

Let $\theta = (A_1, A_2, B_1, B_2, B_3)$ be a collection of learnable linear maps and $\varphi$ a differentiable nonlinearity. Define
\begin{align*}
    &f_{\text{up}}^{\text{edge}}\left(e^{(\ell)}_{(i,j)}\right) = A_1e^{(\ell)}_{(i,j)}, \quad f^{\text{edge}}_{\text{agg}}\left(h_i^{(\ell)} - h_j^{(\ell)}\right) = A_2\left(\sqrt{w_{ij}}(h_i^{(\ell)} - h_j^{(\ell)})\right), \\
    &f_{\text{up}}^{\text{node}}\left(h_j^{(\ell)}, \sum_{i\in N(j)}e_{(i,j)}^{(\ell+1)}\right) = \varphi \left(B_1h_j^{(\ell)} + B_2 (x_0)_j + B_3 \sum_{i\in N(j)}e_{(i,j)}^{(\ell+1)}\right ), \\
    &\mathsf{NORMALIZE}(e) = \text{prox}_{\sigma F^*}(e).
\end{align*}
One step of PDHG message passing is defined as
\begin{align*}
    e_{(i,j)}^{(\ell + 1)} &= \mathsf{NORMALIZE}(f_{\text{up}}^{\text{edge}}(e_{(i, j)}^{\ell}) + f_{\text{agg}}^{\text{edge}}(h_i^{(\ell)} - h_j^{(\ell)}))\\
    h_{j}^{(\ell + 1)} &= f_{\text{up}}^{\text{node}}(h_j^{(\ell)}, \sum_{i \in N(j)} e_{(i,j)}^{(\ell + 1)}).
\end{align*}
The $L$-layer processor is the composition of $L$ PDHG message passing layers.
$
    \mathcal{N}_\theta^{L} := \Phi_{\theta_{L}} \circ \ldots \circ \Phi_{\theta_1}.
$
The layers of the processor can be interpreted as a learnable version of one PDHG step. The matrix $A_1$ controls how much edge-dual information is retained from the previous layer. The matrix $A_2$ controls how node differences are transformed before updating the edge state. The matrices $B_1$, $B_2$, and $B_3$ combine the previous node state, the original input features, and the aggregated edge feedback. In classical PDHG, these transformations are fixed scalars (e.g., step sizes) or simple preconditioners; in \name, they are learned from a distribution of training graph instances.

\paragraph{Decoder. } As with the encoder, the architecture implements two decoders for edges and nodes: $\widehat{f}_n$, $\widehat{f}_e$ which map from the latent space to the data space.
%\begin{remark}
%    The framework is implemented in a recurrent (iterative) manner: after the initial encoding of node/edge inputs into latent states, the processor block $\Phi_\theta$ is applied repeatedly for $T$ steps, with shared parameters across steps. At iteration $\ell$, the processor updates edge states (dual-like variables) from current node states, then updates node states (primal-like variables) using aggregated incident edge messages, yielding  $(h(T), e (T)) = \mathcal{N}_\theta(h^{(0)}, e^{(0)};\mathcal{G}, X_0)$. The decoder is applied only after the final (or averaged) iterate.
%\end{remark}
We show in the following theorem that this architecture is universal. If $G$ decomposes over nodes, $F$ decomposes over edges, and the associated proximal maps are separable, then for any desired accuracy, there exists a choice of parameters with depth bounded by a quantity depending on the spectral norm of the graph Laplacian such that the network achieves an arbitrarily small primal--dual gap. The proof is deferred to the appendix.
\begin{theorem}[Universality]\label{thm:universality}
    Let $\epsilon > 0$ be a target primal--dual gap. There exists a parameter configuration $\theta^*$ and a depth $L$ satisfying
    $$
        %L \leq \frac{C \cdot \lVert K \rVert^2}{\epsilon}
        L \leq \mathcal{O}\left(\frac{\lVert K \rVert^2}{\epsilon}\right)
    $$
    such that the network achieves an ergodic primal--dual gap of at most $\epsilon$. Here, $\lVert K\rVert$ is the spectral norm of the weighted incidence matrix. 
\end{theorem}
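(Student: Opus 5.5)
We construct parameters $\theta^*$ so that each \name layer reproduces exactly one iteration of diagonally preconditioned PDHG, \eqref{eq:pdhg-u}--\eqref{eq:pdhg-p}. We then invoke Theorem~\ref{thm:pdhg_convergence} to bound the ergodic gap. We work in the separable setting assumed in the statement: $G(u)=\sum_i G_i(u_i)$ and $F(z)=\sum_e F_e(z_e)$, with separable proximal maps. We take scalar preconditioners $S=I$ and $T=I$, so that the step-size condition reads $st>\|K\|^2$. Write $\tau=1/s$ and $\sigma=1/t$. The encoder is chosen to be the identity, with $x_0=x$, $h^{(0)}=u^{(0)}$ and $e^{(0)}=p^{(0)}$. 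The decoder likewise reads off $u$ and $p$ directly.

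\textbf{Step 1 (layer $=$ one PDHG step).} We reorder the iteration into the ``dual first'' form that matches the layer. The dual update is
\[
p^{k+1}=\mathrm{prox}_{\sigma F^*}\bigl(p^k+\sigma K\bar u^k\bigr),
\]
and the primal update is
\[
u^{k+1}=\mathrm{prox}_{\tau G}\bigl(u^k-\tau K^\top p^{k+1}\bigr),
\]
with $\bar u^k = 2u^k-u^{k-1}$. This is the standard equivalent ordering of Chambolle--Pock, and Theorem~\ref{thm:pdhg_convergence} applies to it with the same rate.

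We then match parameters term by term. Set $A_1=I$ and $A_2=\sigma I$. Since $(K\bar u)_e=\sqrt{w_e}(\bar u_i-\bar u_j)$, the edge update with $\mathsf{NORMALIZE}=\mathrm{prox}_{\sigma F^*}$ reproduces $p^{k+1}$ edgewise, by separability of $F^*$. For the node update, $(K^\top p)_j$ is a signed and weighted sum of the incident edge variables, so it can be written as $B_3$ acting on $\sum_{i\in N(j)} e_{(i,j)}$ with $B_3=-\tau I$. The signs come from the orientation and the factor $\sqrt{w}$ can be absorbed into the edge state via $A_2$. We set $B_1=I$ and take $\varphi=\mathrm{prox}_{\tau G_j}(\cdot)$, which depends on $x_0$ through $B_2$. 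For example, in convex clustering $\mathrm{prox}_{\tau G}(v)=(v+\tau x)/(1+\tau)$, which is affine; this is realized with $\varphi=\mathrm{id}$, $B_1=B_3$-scaled by $1/(1+\tau)$ and $B_2=\tau/(1+\tau)$.

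\textbf{Step 2 (extrapolation via memory).} The extrapolation $\bar u=2u^k-u^{k-1}$ needs the previous iterate. We handle this by doubling the latent width, storing $(u^k,u^{k-1})$ in $h$, and letting block matrices $A_2,B_1$ form $2u^k-u^{k-1}$ and shift $u^k$ into the memory slot. We expect this to be the main obstacle. The difficulty is that $\varphi$ must act as the prox on one block and as the identity on the other, and that node-local prox maps must remain expressible under the given architecture. We would first try a blockwise nonlinearity $\varphi=(\mathrm{prox}_{\tau G},\mathrm{id})$. If this is not admissible, the fallback is to drop extrapolation and use a semi-implicit variant; in that case the rate must be re-derived from the Arrow--Hurwicz/PDHG analysis with $\theta=0$, which is weaker.

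\textbf{Step 3 (depth bound).} Choose $\tau=\sigma=c/\|K\|$ with $c<1$, so that $st=\|K\|^2/c^2>\|K\|^2$. Theorem~\ref{thm:pdhg_convergence} then gives an ergodic gap of at most
\[
\frac{\|K\|}{2cN}\bigl(\|u^0-u^\star\|^2+\|p^0-p^\star\|^2\bigr).
\]
This is $\mathcal O(\|K\|/N)$, which is already at least as good as the claimed bound.

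To land exactly on the stated form, we can alternatively take $\sigma=1$ and $\tau=c/\|K\|^2$. This gives a bound of order $\|K\|^2 D^2/N$, where $D^2$ denotes the initial squared distance to the saddle point. Setting $N=L=\lceil C\|K\|^2D^2/\epsilon\rceil$ then yields gap at most $\epsilon$. The ergodic averages are computed by the decoder, which is a linear average of the layer outputs; alternatively, a running-mean channel can be carried in the latent state.
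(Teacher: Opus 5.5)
Your proof has the same structure as the paper's. First comes a representation step: the paper's Lemma~\ref{lem:representation} uses exactly your convex-clustering parameters $A_1=I$, $A_2=\sigma I$, $B_1=\frac{1}{1+\tau}I$, $B_2=\frac{\tau}{1+\tau}I$, $B_3=-\frac{\tau}{1+\tau}I$, with the ball projection as $\mathsf{NORMALIZE}$. Then comes Theorem~\ref{thm:pdhg_convergence}, then a step-size choice. The paper takes $\tau=\sigma=1/(1+\|K\|^2)$, which plays the same role as your $\sigma=1$, $\tau=c/\|K\|^2$.

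The one substantive difference is your Step~2, and it works in your favor. The paper's lemma asserts that the layer reproduces PDHG but never accounts for extrapolation. The edge update receives $h^{(\ell)}_i-h^{(\ell)}_j$, not $2h^{(\ell)}-h^{(\ell-1)}$, so the paper's configuration actually implements the $\theta=0$ (Arrow--Hurwicz) iteration. That is exactly your ``fallback'', and Theorem~\ref{thm:pdhg_convergence} does not cover it. Your width doubling closes that hole: take $A_2=\sigma\begin{pmatrix}2I & -I\end{pmatrix}$, and let $B_1,B_2,B_3$ write the prox argument into the first block and copy $u^{(\ell)}$ into the second. It is not really an obstacle either. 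For convex clustering the prox is affine, so $\varphi=\mathrm{id}$ on both blocks suffices (initialize the memory slot with $u^{(0)}$). For general separable $G$, a blockwise activation is no less admissible than the ``specific activation function'' the paper's lemma already invokes. So commit to Step~2 and discard the fallback, which would leave you without Theorem~\ref{thm:pdhg_convergence}.

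There are two smaller points.

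First, the claim that the factor $\sqrt{w}$ in $(K^\top p)_j$ ``can be absorbed into the edge state via $A_2$'' does not work.
\begin{itemize}
\item $A_1$ and $A_2$ are shared across edges, so they cannot produce an edge-dependent scale.
\item The edge state must be $p_e$ itself, because it is recycled through $A_1$ and projected onto the ball of radius $\lambda_e$.
\item Storing $\sqrt{w_e}\,p_e$ instead would need an edge input scaled by $w_e$ rather than the hard-coded $\sqrt{w_{ij}}$, and a projection radius $\lambda w_e$.
\end{itemize}
Since the architecture aggregates the unweighted sum $\sum_{i\in N(j)}e_{(i,j)}$, an exact match requires a $\sqrt{w_{ij}}$-weighted node aggregation, i.e.\ one acting as $K^\top$. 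The paper assumes this tacitly (Remark~\ref{rem:linear_regime} writes $K^\top e^{(\ell+1)}$). You should state it as an assumption on the architecture rather than claim to derive it.

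Second, your choice $\tau=\sigma=c/\|K\|$ gives the bound $\mathcal O(\|K\|D^2/N)$. This is sharper than the paper's, but it is ``at least as good'' as the stated bound only when $\|K\|\gtrsim 1$. Likewise, your alternative choice really yields $\mathcal O\bigl((1+\|K\|^2)D^2/N\bigr)$, just as the paper's $R^2$ scales like $1+\|K\|^2$. Both arguments therefore implicitly treat $\|K\|$ as bounded below when writing the depth as $\mathcal O(\|K\|^2/\epsilon)$.
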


\subsection{Feedforward dynamics of \name} \label{sec:method_linearization}
We have shown that \name is expressive enough to implement PDHG. In this section, we will show that our proposed architecture is also capable of learning an accelerated optimization algorithm as compared to vanilla PDHG. Our results here imply that for graphs that exhibit rapid eigenvalue decay, \name can learn an accelerated algorithm.
% (1) Here, we will show that our proposed architecture can also learn an accelerated optimization algorithm (2) First, at each layer, our PDHG-net will eliminate the error associated with one eigenvector (3) the depth of the GNN is independent of the size of the input graph.
We consider the linearized feedforward dynamics of PDHG-inspired message passing. Specifically, we characterize these dynamics under the following set of assumptions:
\begin{assumption} \label{def:linear_regime}
    The layer is operating in the Linear Regime if the following two conditions hold:
    \begin{enumerate}
    \item The activation function $\varphi_{\text{node}}(\cdot)$ is the identity. 
    \item For the projection $\text{Proj}_{\mathcal{C}}(P)$ onto the $\ell_2$-ball constraints, there exists a partition of edges $E = E_{\text{active}} \cup E_{\text{inactive}}$. In the linear regime, the projection acts as the identity on $E_{\text{inactive}}$.
    \end{enumerate}
\end{assumption}
%
%\sam{Maybe we don't need to say our focus on the linear regime is "justified". }
%While the global dynamics of \name are nonlinear due to the projection operator and nonlinear node activations, we claim our focus on the \emph{linear regime} is justified. 
In the classical analysis of nonlinear proximal methods, the iterates $(u^{(k)}, p^{(k)})$ converge in two stages. In the first stage, the set of ``active'' constraints that lie on the boundary of the feasible set $\mathcal{C}$ stabilizes~\cite{Liang03062018}. Once this active set is identified, the projection operator $\text{Proj}_{\mathcal{C}}$ simplifies locally to an affine operator~\cite{HareLewis2004IdentifyingActiveConstraints}. Namely, it acts as the identity for inactive variables and a projection onto the tangent space for active variables.
Within this linear regime, the linearization of our framework is characterized by the following updates:
\begin{remark}\label{rem:linear_regime}
    Let $\mathcal{I} = \{e \in E : \|p_e\| < \lambda_e\}$. In the linear regime, the message passing updates are:
    \begin{align*}
        e^{(\ell+1)} &= e^{(\ell)}A_1^\top +  K U^{(\ell)}A_2^\top \\
        U^{(\ell+1)} &= U^{(\ell)}B_1 ^\top + (K^\top e^{(\ell+1)})B_3^\top + C
    \end{align*}
    where $A_1, A_2$ and $B_1, B_3$ are the network parameters, and $C$ is some constant. Substituting the edge update into the node update recovers the following recursion:
    \begin{align*}
        U^{(\ell+1)}
        &=U^{(\ell)}B_1 ^\top + (K^\top e^{(\ell)})A_1^\top B_3^\top + (K^\top K)U^{(\ell)} A_2^\top B_3^\top  + C,
    \end{align*}
    where $K^\top K$ is the graph Laplacian.
\end{remark}
Analyzing the recursion yields a pair of results. The first result demonstrates that a certain parameterization implies fast convergence on expander-type graphs that exhibit rapid eigenvalue decay. In particular, the result below says that each layer can eliminate a source of error associated with one eigen-direction of the graph Laplacian. 

\begin{proposition}[Memoryless dynamics]~\label{prop:memorylessdynamics}
    Let $\Delta = K^\top K \in \mathbb{R}^{|V| \times |V|}$ be the graph Laplacian restricted to the inactive edge set defined in the Linear Regime. Let $\{(\lambda_k, \psi_k)\}_{k=1}^{|V|}$ be the orthonormal eigenpairs of $\Delta$. Define the error matrix $\mathcal{E}^{(\ell)} = U^{(\ell)} - U^*$. 
    
    For any target eigenvalue $\lambda_k \neq 0$, there exists a parameterization of the GNN layer such that the error at the next layer, $\mathcal{E}^{(\ell+1)}$, satisfies
    $
        \psi_k^\top \mathcal{E}^{(\ell+1)} = 0.
    $
    That is, the projection of the error onto the $k$-th eigenspace is eliminated for all feature dimensions.
\end{proposition}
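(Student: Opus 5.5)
We work in the linear regime of Assumption~\ref{def:linear_regime} and use the substituted recursion of Remark~\ref{rem:linear_regime}. The memoryless parameterization is $A_1 = 0$, so the edge state carries no information across layers. The plan is to choose the remaining parameters so that one layer is a scalar affine Richardson step $U \mapsto U - \alpha(\Delta U - R)$, and then to tune $\alpha$ to the target eigenvalue.

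\textbf{Step 1 (reduce to a scalar recursion).} With $A_1 = 0$, the recursion becomes
$$U^{(\ell+1)} = U^{(\ell)}B_1^\top + \Delta U^{(\ell)} A_2^\top B_3^\top + C.$$
Take $B_1 = I_d$, $A_2 = \sqrt{\alpha}\, I_d$ and $B_3 = -\sqrt{\alpha}\, I_d$, or $A_2 = I_d$, $B_3 = -\alpha I_d$. Then
$$U^{(\ell+1)} = U^{(\ell)} - \alpha\,\Delta U^{(\ell)} + C,$$
where the feature dimensions decouple because every right factor is a multiple of $I_d$.

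\textbf{Step 2 (fix the constant so $U^*$ is a fixed point).} The constant $C$ comes from the $B_2 x_0$ term and from the constant contributions of the active edges, whose projection is affine. We choose $B_2$, and hence $C$, so that $U^* = U^* - \alpha\Delta U^* + C$, i.e.\ $C = \alpha\,\Delta U^*$. This is consistent with the optimality conditions: in the linear regime, the fixed-point equation of the linearized PDHG reads $\Delta U^* = \text{const}$, with the constant absorbed into $C$.

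This is the step I expect to be the main obstacle. One must argue that $C$ can be realized through $B_2(x_0)_j$ and the active-edge terms, or more simply interpret ``some constant $C$'' in Remark~\ref{rem:linear_regime} as freely chosen to be consistent with $U^*$ being a fixed point. I would state this explicitly as the consistency assumption on $U^*$: $U^*$ is a fixed point of the layer for any step $\alpha$, which holds once $C = \alpha R$ with $\Delta U^* = R$. With this, subtracting the fixed-point identity gives the error recursion
$$\mathcal{E}^{(\ell+1)} = (I - \alpha\Delta)\,\mathcal{E}^{(\ell)}.$$

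\textbf{Step 3 (spectral annihilation).} Since $\Delta$ is symmetric with orthonormal eigenpairs, $\psi_k^\top \Delta = \lambda_k \psi_k^\top$. Therefore
$$\psi_k^\top\mathcal{E}^{(\ell+1)} = (1-\alpha\lambda_k)\,\psi_k^\top\mathcal{E}^{(\ell)}.$$
Choosing $\alpha = 1/\lambda_k$, which is possible since $\lambda_k \neq 0$ and $\lambda_k > 0$ because $\Delta \succeq 0$, gives $\psi_k^\top\mathcal{E}^{(\ell+1)} = 0$. This holds for every column, i.e.\ for all feature dimensions simultaneously.

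Finally, I would remark that stacking layers with $\alpha_\ell = 1/\lambda_{k_\ell}$ gives the product $\prod_\ell (I - \Delta/\lambda_{k_\ell})$. This product annihilates each targeted eigendirection in turn, since the factors commute, which yields the claimed acceleration on graphs with few dominant eigenvalues.
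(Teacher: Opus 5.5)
Your proposal is correct and follows the paper's proof essentially verbatim: set $A_1 = 0$, $B_1 = I_d$, and $A_2^\top B_3^\top = -\lambda_k^{-1} I_d$ (e.g.\ $A_2 = I_d$, $B_3 = -\lambda_k^{-1} I_d$), then left-multiply the error recursion by $\psi_k^\top$ and use $\psi_k^\top \Delta = \lambda_k \psi_k^\top$. The one difference is that your Step 2 states explicitly the assumption that $U^*$ is a fixed point of the linearized layer, so that $C$ cancels; the paper uses this silently when it writes $\mathcal{E}^{(\ell+1)} = \mathcal{E}^{(\ell)} + \Delta \mathcal{E}^{(\ell)} W$ directly.
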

While this result implies the algorithm learned by \name can rapidly converge on graphs with fast eigenvalue decay, the implied theoretical convergence rate is still $O(\kappa)$. Below, we demonstrate that the network has the capacity to learn faster, accelerated algorithms that converge in $O(\sqrt{\kappa})$. In particular, retaining edge memory yields a second-order recurrence which bears a similarity to classical accelerated first-order methods.
\begin{proposition}[Dynamics with memory]\label{prop:memorydynamics}
    Let the GNN operate in the linear regime with edge memory $A_1 = \gamma I_d$ for some $\gamma \in (0,1)$. The primal error dynamics satisfy the second-order recurrence of the Polyak Heavy Ball method~\cite{Polyak1964SomeMO}:
    $$
        \mathcal{E}^{(\ell+1)} = \mathcal{E}^{(\ell)} + \gamma (\mathcal{E}^{(\ell)} - \mathcal{E}^{(\ell-1)}) - \Delta \mathcal{E}^{(\ell)} M,
    $$
    where $M$ is a learnable effective step size matrix.
\end{proposition}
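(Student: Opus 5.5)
We start from the linear-regime recursion of Remark~\ref{rem:linear_regime} with $A_1 = \gamma I_d$:
\begin{align*}
e^{(\ell+1)} &= \gamma e^{(\ell)} + K U^{(\ell)} A_2^\top, &
U^{(\ell+1)} &= U^{(\ell)} B_1^\top + (K^\top e^{(\ell+1)}) B_3^\top + C .
\end{align*}
We choose $B_1 = I_d$, so the primal update is residual. The first step is to eliminate the dual (edge) variable. Subtracting consecutive node updates gives
$$
U^{(\ell+1)} - U^{(\ell)} = K^\top e^{(\ell+1)} B_3^\top ,
$$
since $C$ cancels. The edge recursion then gives
$$
K^\top e^{(\ell+1)} = \gamma K^\top e^{(\ell)} + \Delta U^{(\ell)} A_2^\top ,
$$
where $\Delta = K^\top K$ is the Laplacian on the inactive edges. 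Multiplying on the right by $B_3^\top$ and using $K^\top e^{(\ell)} B_3^\top = U^{(\ell)} - U^{(\ell-1)}$, we get
$$
U^{(\ell+1)} - U^{(\ell)} = \gamma\bigl(U^{(\ell)} - U^{(\ell-1)}\bigr) + \Delta U^{(\ell)} A_2^\top B_3^\top .
$$
Setting $M := -A_2^\top B_3^\top$ writes the last term as $-\Delta U^{(\ell)} M$. Because $A_2$ and $B_3$ are free, $M$ can be any $d\times d$ matrix, for example $\eta I_d$ with $\eta>0$. This is the learnable effective step size.

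The second step passes from iterates to errors $\mathcal E^{(\ell)} = U^{(\ell)} - U^*$. The momentum term is unchanged, since $U^{(\ell)} - U^{(\ell-1)} = \mathcal E^{(\ell)} - \mathcal E^{(\ell-1)}$. For the gradient term we need $\Delta U^* M = 0$, i.e.\ $U^*$ is a fixed point of the recursion. Plugging the constant sequence $U^{(\ell)} \equiv U^*$ into the second-order recurrence shows this is exactly the fixed-point condition. The constant $C$ and the frozen contribution of the active edges were absorbed into the first-order form, so we take $U^*$ to be the stationary point of the linearized layer map. This matches the definition of $\mathcal E$ already used in Proposition~\ref{prop:memorylessdynamics}.

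The main obstacle is this fixed-point and constant-term bookkeeping. The differencing removes $C$ only because $B_1 = I_d$. Active edges contribute a tangent-space projection rather than the identity, and this must be folded into $\Delta$ as the inactive-edge Laplacian, in the same way as in Proposition~\ref{prop:memorylessdynamics}. We must also check that $U^*$ is consistent with $\Delta U^* M = 0$. If that fails, for instance because the data-fidelity term contributes a non-Laplacian piece, we would fold it into an affine offset and define $U^*$ as the fixed point of the affine map, which again gives the homogeneous recurrence. Finally, we remark that choosing $\gamma \approx \bigl((\sqrt{\kappa}-1)/(\sqrt{\kappa}+1)\bigr)^2$ and $M = \eta I_d$ with the standard Polyak step size gives the $O(\sqrt{\kappa})$ rate mentioned before the statement.
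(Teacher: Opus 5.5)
Your elimination of the edge variable is the same as the paper's: difference the node update, apply $K^\top(\cdot)B_3^\top$ to the edge recursion, substitute the index-shifted difference, and set $M=-A_2^\top B_3^\top$. However, the constant bookkeeping in your main line is wrong. With $B_1=I_d$ the node update rearranges to $U^{(\ell+1)}-U^{(\ell)}=K^\top e^{(\ell+1)}B_3^\top+C$, and nothing cancels $C$. So $K^\top e^{(\ell)}B_3^\top=U^{(\ell)}-U^{(\ell-1)}-C$, and carrying this through gives
\[
U^{(\ell+1)}-U^{(\ell)}=\gamma\bigl(U^{(\ell)}-U^{(\ell-1)}\bigr)-\Delta U^{(\ell)}M+(1-\gamma)C .
\]
Hence the correct stationarity condition is $\Delta U^\star M=(1-\gamma)C$, not $\Delta U^\star M=0$. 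The condition you impose contradicts the actual fixed point whenever $C\neq 0$. For invertible $M$ it would even force $U^\star\in\ker\Delta$, i.e.\ $U^\star$ constant on each component of the inactive graph. As written, your argument therefore only covers $C=0$. Your remark that the differencing removes $C$ ``because $B_1=I_d$'' should also be dropped: $B_1=I_d$ makes the node difference close up under substitution, but it never removes $C$.

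The repair is the contingency in your last paragraph, and it should be the main argument. Take $(U^\star,e^\star)$ to be the fixed point of the affine layer map, i.e.\ $e^\star=(1-\gamma)^{-1}KU^\star A_2^\top$ and $K^\top e^\star B_3^\top=-C$. Existence requires $(1-\gamma)C$ to lie in the range of $U\mapsto\Delta UM$, an assumption the paper also makes tacitly by speaking of $U^\star$. Set $\tilde e^{(\ell)}=e^{(\ell)}-e^\star$ and $\mathcal E^{(\ell)}=U^{(\ell)}-U^\star$. Subtracting the fixed-point equations, this pair satisfies the homogeneous system $\tilde e^{(\ell+1)}=\gamma\tilde e^{(\ell)}+K\mathcal E^{(\ell)}A_2^\top$, $\mathcal E^{(\ell+1)}=\mathcal E^{(\ell)}+K^\top\tilde e^{(\ell+1)}B_3^\top$. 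This is exactly the system the paper writes down at the start of its proof; it makes this shift silently, and the shift is needed on the edge variable as well as on $U$. Your elimination applied to this system gives the heavy-ball recurrence with no residual term.
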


\begin{corollary}\label{cor:heavyball}
    Consider the simplification $M = -A_2^\top B_3^\top = \eta I_d$ for $\eta > 0$. Then, the worst-case error decays at the accelerated rate 
    $
        \max_k \|c_k^{(\ell)}\|_2 \le C \left(\frac{\sqrt{\kappa}-1}{\sqrt{\kappa}+1}\right)^\ell 
    $
    for some constant $C$.
\end{corollary}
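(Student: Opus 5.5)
Under the simplification $M=\eta I_d$, the recurrence of Proposition~\ref{prop:memorydynamics} becomes the Polyak heavy-ball recursion
\[
\mathcal E^{(\ell+1)}=\bigl((1+\gamma)I-\eta\Delta\bigr)\mathcal E^{(\ell)}-\gamma\,\mathcal E^{(\ell-1)} .
\]
The plan is the classical spectral analysis of heavy ball, run one Laplacian eigendirection at a time. Expand the error in the orthonormal eigenbasis $\{\psi_k\}$ of $\Delta$. Write $c_k^{(\ell)}:=\psi_k^\top\mathcal E^{(\ell)}\in\mathbb R^{1\times d}$. Since $\Delta$ is symmetric and $M$ is a scalar multiple of the identity, the recursion decouples across $k$ and across feature dimensions:
\[
c_k^{(\ell+1)}=(1+\gamma-\eta\lambda_k)\,c_k^{(\ell)}-\gamma\, c_k^{(\ell-1)} .
\]
I take $\kappa=\lambda_{\max}/\lambda_{\min}$ over the nonzero spectrum of $\Delta$. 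Components along $\ker\Delta$ (constant vectors on components of the inactive graph) I handle separately. They are either pinned by the constant term $C$ / the fidelity term, or already zero because $U^\star$ and the iterates share the same component means. I would state this as a standing assumption and restrict to $k$ with $\lambda_k>0$.

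Each scalar recurrence has the companion matrix $T_k=\begin{pmatrix}1+\gamma-\eta\lambda_k & -\gamma\\ 1 & 0\end{pmatrix}$, acting on $(c_k^{(\ell)},c_k^{(\ell-1)})$. Its characteristic polynomial is $z^2-(1+\gamma-\eta\lambda_k)z+\gamma$. I then choose the standard Polyak parameters
\[
\eta=\frac{4}{(\sqrt{\lambda_{\max}}+\sqrt{\lambda_{\min}})^2},\qquad
\gamma=\Bigl(\frac{\sqrt\kappa-1}{\sqrt\kappa+1}\Bigr)^2\in(0,1).
\]
With these choices the discriminant $(1+\gamma-\eta\lambda_k)^2-4\gamma\le 0$ for every $\lambda_k\in[\lambda_{\min},\lambda_{\max}]$, which is equivalent to $|1+\gamma-\eta\lambda_k|\le 2\sqrt\gamma$. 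Checking this inequality is routine: it holds with equality at both endpoints, and the expression is linear in $\lambda_k$. The roots are therefore complex conjugates (or a repeated real root) of modulus exactly $\sqrt\gamma=\frac{\sqrt\kappa-1}{\sqrt\kappa+1}$.

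The main obstacle is turning spectral radius into a norm bound with a uniform constant. At the endpoints $\lambda_{\min},\lambda_{\max}$ the roots coincide, $T_k$ is a Jordan block, and the bound picks up a factor $(\ell+1)$. For interior $\lambda_k$, diagonalization has a condition number that blows up near the endpoints. I would handle this by writing the explicit closed form $c_k^{(\ell)}=\rho^\ell\bigl(\alpha_k\cos(\ell\theta_k)+\beta_k\sin(\ell\theta_k)\bigr)$ with $\rho=\sqrt\gamma$, or the analogous form $\rho^\ell(\alpha_k+\beta_k\ell)$ in the repeated-root case. This gives $\|c_k^{(\ell)}\|\le C_k(\ell+1)\rho^\ell$, with $C_k$ depending only on $c_k^{(0)},c_k^{(1)}$. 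To get the clean statement with a constant $C$, I would either absorb the polynomial factor by shrinking $\eta$ slightly so that every $\lambda_k$ lies strictly inside the region where the discriminant is negative (degrading the rate only by an arbitrarily small amount), or interpret ``rate'' as holding up to the standard $\ell$-polynomial factor. Finally, taking $C=\max_k C_k$, which is finite because there are finitely many eigenpairs, yields $\max_k\|c_k^{(\ell)}\|_2\le C\bigl(\frac{\sqrt\kappa-1}{\sqrt\kappa+1}\bigr)^\ell$.
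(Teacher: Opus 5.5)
Your route is the paper's route: project onto the eigenvectors $\psi_k$ of $\Delta$, reduce to $c_k^{(\ell+1)}=(1+\gamma-\eta\lambda_k)c_k^{(\ell)}-\gamma c_k^{(\ell-1)}$, and insert Polyak's $\eta^\star,\gamma^\star$. The paper simply cites Polyak at this point. You instead verify $|1+\gamma-\eta\lambda_k|\le 2\sqrt{\gamma}$ yourself, which is correct. You also flag two things the paper's proof passes over. First, the Laplacian's zero eigenvalue gives roots $1$ and $\gamma$, so that mode does not decay and $\kappa$ must be taken over the positive spectrum. Second, the roots are double, equal to $\pm\rho$, at $\lambda_{\min}$ and $\lambda_{\max}$, where $\rho=(\sqrt\kappa-1)/(\sqrt\kappa+1)$.

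The gap is your final sentence. It asserts the clean bound $\max_k\|c_k^{(\ell)}\|_2\le C\rho^\ell$, which neither of your fixes delivers. Allowing the polynomial factor gives $C(\ell+1)\rho^\ell$, and perturbing the parameters gives $C'(\rho')^\ell$ for some $\rho'>\rho$.

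No argument can do better. Both roots of $z^2-az+\gamma$ lie in $\{|z|\le\rho\}$ if and only if $\gamma\le\rho^2$ and $|a|\le\rho+\gamma/\rho$. Imposing this at $\lambda_{\min}$ and $\lambda_{\max}$ forces $\gamma=\rho^2$ and $\eta\lambda_{\min}=(1-\rho)^2$, which is exactly Polyak's pair. For that pair, both extreme modes are Jordan blocks.

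Concretely, take $e^{(0)}=0$ in Proposition~\ref{prop:memorydynamics}. Then $c^{(1)}=(1-\eta\lambda)c^{(0)}$, which equals $\rho(2-\rho)c^{(0)}$ at $\lambda_{\min}$. The solution there is $c^{(\ell)}=\rho^\ell\bigl(1+(1-\rho)\ell\bigr)c^{(0)}$, so the ratio $\|c^{(\ell)}\|_2/\rho^\ell$ grows linearly. The corollary therefore holds only up to a factor $(\ell+1)$, or with an arbitrarily slightly worse rate, and that is what you should conclude. The paper's appeal to Polyak hides the same issue.

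A smaller correction to your first fix. With $\gamma=\rho^2$ fixed, the nonpositive-discriminant set $(1-\sqrt\gamma)^2\le\eta\lambda\le(1+\sqrt\gamma)^2$ has endpoint ratio exactly $\kappa$. So no choice of $\eta$ places every $\lambda_k$ strictly inside it. Shrinking $\eta$ instead moves $\lambda_{\min}$ into the distinct-real-root regime: the companion matrix is diagonalizable and its spectral radius is slightly above $\rho$. That still yields the degraded rate. Placing the whole spectrum strictly inside the complex-root band would require increasing $\gamma$, i.e.\ tuning for some $\kappa'>\kappa$.
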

The results above imply convergence that is \emph{independent of the size of the graph} for families of graphs with bounded condition number $\kappa$. This is a unique property of \name and is a notable shortcoming of prior NAR works  ~\cite{loukas2019graph}. We improve the rate in \Cref{cor:heavyball_joint_kappa}. 
\begin{corollary}\label{cor:layerbound}
    Let the GNN operate in the linear regime and let the primal error dynamics satisfy the second-order recurrence of the Polyak Heavy Ball method as in \Cref{prop:memorydynamics}. Assume that at the first layer, the solution found by the neural network incurs some error $c$ for each vertex. For graphs with bounded condition number $\kappa$, there is a parameter configuration for a depth $\mathcal{O}(\kappa)$ GNN such that $\mathcal{E} < \mathcal{O}(\kappa, 1/\varepsilon)$.
\end{corollary}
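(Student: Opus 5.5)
We prove Corollary~\ref{cor:layerbound} by combining the heavy-ball recurrence of Proposition~\ref{prop:memorydynamics} with the rate of Corollary~\ref{cor:heavyball}, then solving for the number of layers that drives the worst-case spectral error below the target tolerance.

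\textbf{Step 1: diagonalize the recurrence.} Work in the linear regime and take the simplification $M=\eta I_d$ and $A_1=\gamma I_d$. Expand the error in the orthonormal eigenbasis of $\Delta$, writing $\mathcal{E}^{(\ell)}=\sum_k \psi_k (c_k^{(\ell)})^\top$ with $c_k^{(\ell)}\in\mathbb{R}^d$. The recurrence of Proposition~\ref{prop:memorydynamics} then decouples into the scalar-per-mode (vector-valued) recurrences
\[
c_k^{(\ell+1)}=(1+\gamma-\eta\lambda_k)\,c_k^{(\ell)}-\gamma\, c_k^{(\ell-1)} .
\]
The hypothesis that each vertex incurs error $c$ at the first layer gives $\|\mathcal{E}^{(0)}\|_F\le c\sqrt{|V|}$. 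It also gives the per-mode bound $\|c_k^{(0)}\|_2\le \|\mathcal{E}^{(0)}\|_F$. We initialize the memory with $\mathcal{E}^{(-1)}=\mathcal{E}^{(0)}$, i.e.\ zero initial edge momentum.

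\textbf{Step 2: choose the parameters and apply the rate.} On the nonzero spectrum $[\lambda_{\min},\lambda_{\max}]$, with $\kappa=\lambda_{\max}/\lambda_{\min}$, we take the standard Polyak choice
\[
\eta=\frac{4}{(\sqrt{\lambda_{\max}}+\sqrt{\lambda_{\min}})^2},\qquad
\gamma=\left(\frac{\sqrt\kappa-1}{\sqrt\kappa+1}\right)^2 .
\]
For this choice the characteristic roots of every mode's $2\times 2$ companion matrix have modulus $\sqrt\gamma$. Corollary~\ref{cor:heavyball} then gives
\[
\max_k\|c_k^{(\ell)}\|\le C\,\rho^\ell,\qquad \rho=\frac{\sqrt\kappa-1}{\sqrt\kappa+1}.
\]
The constant satisfies $C\lesssim \kappa\,\|\mathcal{E}^{(0)}\|$, i.e.\ it is at most linear in $\kappa$. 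This comes from the usual Jordan-block or near-degenerate root factor, which is at most $O(\ell)$, or equivalently a condition-number factor of the companion eigenbasis.

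The zero eigenvalue (the constant vector on each connected component) needs separate treatment. Here we argue that the constant $C$ in the node update, coming from $B_2 x_0$, can be chosen so that the mean is exact. In convex clustering the mean of $u^\star$ equals the mean of $x$ on each component, so that mode carries no error. This is a point I expect to need care with.

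\textbf{Step 3: count layers.} We need $C\rho^L\le\varepsilon$, i.e.\ $L\ge \log(C/\varepsilon)/\log(1/\rho)$. Since $\log(1/\rho)\ge 2/(\sqrt\kappa+1)$, it suffices to take
\[
L=O\!\left(\sqrt\kappa\,\log(\kappa c/\varepsilon)\right).
\]
This is dominated by $O(\kappa)$ layers whenever $\log(\kappa c/\varepsilon)\lesssim \sqrt\kappa$. Otherwise the bound is still $O(\kappa\log(1/\varepsilon))$, which is the sense in which the statement's $\mathcal{O}(\kappa,1/\varepsilon)$ is read. Summing the modes then gives $\|\mathcal{E}^{(L)}\|_F\le\sqrt{|V|}\max_k\|c_k^{(L)}\|$. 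Because the initial error is a per-vertex error $c$, normalizing per vertex (RMS error) removes the $|V|$ dependence, which yields the claimed size-independent bound.

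\textbf{Main obstacle.} The delicate step is controlling the constant $C$ in Corollary~\ref{cor:heavyball}, since the companion matrices are non-normal. I would first bound $\|(T_k)^\ell\|\le(\ell+1)\rho^\ell\cdot O(1)$ via the explicit Chebyshev-type closed form of the two-term recurrence. Absorbing the factor $\ell$ costs only a logarithm in the layer count, which keeps the result within the stated $O(\kappa)$ depth.
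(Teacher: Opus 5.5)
Your argument follows the paper's route: take the heavy-ball parameterization of \Cref{prop:memorydynamics}, use the rate $\rho=(\sqrt\kappa-1)/(\sqrt\kappa+1)$, and solve $C\rho^L\le\varepsilon$ via $\log(1/\rho)\ge 2/(\sqrt\kappa+1)$, giving depth $O(\sqrt\kappa\log(\cdot/\varepsilon))$. You are more careful than the paper, and your closed form supplies the constant the paper omits. The paper states the target bound $\|\mathcal E^{(\ell)}\|_2\le O(\kappa)\|\mathcal E^{(0)}\|_2$ and then simply solves $\rho^\ell<\epsilon$ without controlling that constant. With zero initial momentum and roots $\rho e^{\pm i\theta_k}$, each mode is $c_k^{(\ell)}=\rho^\ell\bigl[\cos(\ell\theta_k)+(\cos\theta_k-\rho)\tfrac{\sin(\ell\theta_k)}{\sin\theta_k}\bigr]c_k^{(0)}$. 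Hence $\|c_k^{(\ell)}\|_2\le(1+2\ell)\rho^\ell\|c_k^{(0)}\|_2$ uniformly, including the Jordan endpoints $\lambda_{\min},\lambda_{\max}$, and $\sup_\ell(1+2\ell)\rho^\ell=O(\sqrt\kappa)$. Your alternative via the conditioning of the companion eigenbasis is not uniform, since that condition number blows up like $1/\sin\theta_k$ at those endpoints; use the $O(\ell)$ bound. Closing with $A_1=\gamma I_d$, $-A_2^\top B_3^\top=\eta I_d$ in every layer is also more coherent than the paper's final appeal to \Cref{lem:representation}, because exact PDHG steps are not heavy-ball steps.

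Two of your additional steps fail as written.

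(i) The zero mode cannot be fixed through the constant $C$. The error recursion of \Cref{prop:memorydynamics} no longer contains $C$. For $\psi_0\in\ker\Delta=\ker K$ it gives $\psi_0^\top\mathcal E^{(\ell+1)}=\psi_0^\top\mathcal E^{(\ell)}+(K\psi_0)^\top e^{(\ell+1)}B_3^\top=\psi_0^\top\mathcal E^{(\ell)}$, so the kernel component is frozen at its initial value. Moreover, keeping $U^\star$ a fixed point already forces $\psi_0^\top C=0$. The fact that $u^\star$ has the componentwise mean of $x$ helps only if the first-layer output does too. So you must assume that $\mathcal E^{(0)}$ has no component in $\ker\Delta$; the paper assumes this tacitly. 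A literally uniform error $\mathcal E^{(0)}=\mathbf 1c^\top$ lies entirely in $\ker\Delta$ and never decays, so the hypothesis has to be read as a per-vertex bound on a kernel-free error.

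(ii) Your size-independence bookkeeping loses a factor. Combining $\|c_k^{(0)}\|_2\le\|\mathcal E^{(0)}\|_F\le c\sqrt{|V|}$ with $\|\mathcal E^{(L)}\|_F\le\sqrt{|V|}\max_k\|c_k^{(L)}\|_2$ yields $\|\mathcal E^{(L)}\|_F\lesssim|V|\,\kappa c\,\rho^L$. The RMS error therefore still carries a factor $\sqrt{|V|}$, and normalizing per vertex does not remove it. Instead apply the mode-wise linear bound and sum squares: $\|\mathcal E^{(\ell)}\|_F^2=\sum_k\|c_k^{(\ell)}\|_2^2\le(1+2\ell)^2\rho^{2\ell}\|\mathcal E^{(0)}\|_F^2$. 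This bounds the RMS error by $(1+2\ell)\rho^\ell c$ and gives $L=O(\sqrt\kappa\log(\sqrt\kappa\,c/\varepsilon))$, genuinely independent of $|V|$.
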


% \sam{This means that there is a bounded depth GNN which can solve this class of saddle-point problems, given graphs with bounded condition number. Contrast this with the results of \citep{loukas2019graph}, where the depth of the GNN is required to scale with the size of the input graph. }
% \sam{Expander graphs have bounded $\kappa$. }
% In the previous pair of results, the role of the parameterized matrix $M = -A_2^\top B_3^\top$ is opaque. Here we demonstrate that it plays an important role in conditioning the features of each layer.

\subsection{PDHG for convex clustering}
\label{subsec:pdhg-convex-clustering}
Here, we use convex clustering (\eqref{eq:convexclustering}) as a case study for how our \name framework can be directly applied. As previously noted, convex clustering is an important continuous relaxation of the clustering objective where one must balance between a reconstruction objective and a fusion penalty. This makes it a natural choice for PDHG to solve. We derive the explicit message passing form of PDHG for convex clustering. This message passing form can then be directly used in our \name framework to derive a neural version of PDHG for convex clustering.

For the choice of $F$ implied by the convex clustering objective in \eqref{eq:convexclustering}, the convex conjugate is
\begin{align*}
    &F^*(p)
    = \sum_{e\in E} \mathbb{I}_{\{\|p_e\|_2 \le \lambda_e\}}(p_e)
    = \mathbb{I}_{\mathcal{C}}(p), \quad \mathcal{C} := \left\{p : \|p_e\|_2 \le \lambda_e \ \forall e\in E\right\}.
\end{align*}
For the quadratic term, $G$, the convex conjugate, used in \eqref{eq:pdg},  is
$$
    G^*(q)
    = \sum_{i\in V} \left( \langle q_i, x_i\rangle
      + \tfrac12 \|q_i\|_2^2 \right)
    = \langle q, x\rangle + \tfrac12 \|q\|_2^2.
$$
We now specialize the preconditioned PDHG iterations to this problem. Let $S \in \mathbb{R}^{|V|\times |V|}$ and $T \in \mathbb{R}^{|E|\times |E|}$ be symmetric positive definite, and let $s,t>0$ be the (inverse) step sizes. Recall the generic PDHG updates \eqref{eq:pdhg-u} and \eqref{eq:pdhg-p}.
To obtain local message passing updates, we choose node- and edge-wise preconditioners $S = \operatorname{diag}(S_i)_{i\in V}$ and $T = \operatorname{diag}(T_e)_{e\in E}$,
with $S_i \in \mathbb{R}^{d\times d}$, $T_e\in \mathbb{R}^{d\times d}$ symmetric positive definite blocks. Then the proximal operator (\eqref{eq:pdhg-p}) decouples over edges and simplifies, for each $e \in E$:
\begin{equation}
    p_e^{(k+1)}
    = \operatorname{Proj}_{\{\|p\|_2 \le \lambda_e\}}
      \left(
          p_e^{(k)} + \frac{1}{t}T_e^{-1} (Ku^\mathrm{extr})_e
      \right),\label{eq:cc-edge-update}
\end{equation}
where $u^\mathrm{extr}$ denotes the extrapolated primal variable (e.g.,\ $2u^{(k)} - u^{(k-1)}$), and $(Ku^\mathrm{extr})_e = \sqrt{w_e}(u_i^\mathrm{extr} - u_j^\mathrm{extr})$ for $e=(i,j)$.

Similarly, \eqref{eq:pdhg-u} decouples over vertices. The first-order optimality condition for each $i\in V$ yields
\begin{equation}
    (I_d + s S_i) u_i^{(k+1)}
    = s S_i u_i^{(k)} + x_i
      - (K^\top p^{(k+1)})_i, \label{eq:cc-node-update}
\end{equation}
where
$
    (K^\top p)_i
    = \sum_{e=(i,j)\in E} \sqrt{w_e} p_e.
$
Thus each primal update is a local linear transform of the previous iterate, the data $x_i$, and the aggregated dual variables on incident edges.

\begin{remark}
    A particularly simple choice is
    $
        S = I,\quad T = I,
        s = \tfrac{1}{\sigma}, t = \tfrac{1}{\tau},
    $
    for step sizes $\sigma,\tau>0$ satisfying $\tau\sigma \lVert K\rVert^2 < 1$. In this case, \eqref{eq:cc-edge-update} and \eqref{eq:cc-node-update} simplify to the explicit message passing
    \begin{align}
        p_{ij}^{(k+1)}
        &= \operatorname{Proj}_{\{\|p\|_2 \le \lambda_e\}}
           \left(
               p_{ij}^{(k)} + \tau \sqrt{w_{ij}} \left(u_i^\mathrm{extr} - u_j^\mathrm{extr}\right)
           \right),\\
        u_i^{(k+1)}
        &= \frac{1}{1+\sigma}
           \left(
               u_i^{(k)} + \sigma x_i
               - \sigma \sum_{j : (i,j)\in E} \sqrt{w_{ij}} p_{ij}^{(k+1)}
           \right).
    \end{align}
\end{remark}

\section{Experimental Results}
\label{sec:experiments}
In our experiments, we focus on two main questions: Can a PDHG-aligned model provide effective learned preconditioners or warm starts for generic optimization algorithms?  Does alignment with Primal--Dual Hybrid Gradient (PDHG) promote size generalization? 
Warm-starting is especially important for graph-structured optimization problems, as classical solvers such as PDHG and semismooth Newton methods exhibit runtime that depends nontrivially on the quality of the initial primal and dual variables. 
We use \name{} to amortize high-quality initializations over a distribution of related graphs by producing primal--dual variables that are already near the relevant saddle point or have approximately identified the correct fused-edge structure. 
In our experiments, we demonstrate \name{}'s utility for convex clustering (see \Cref{subsec:pdhg-convex-clustering}). See our appendix for all experimental details including experiments on other saddle-point problems. 
% Our appendix includes all experimental details and includes additional experiments such as ablations, details relating to computational cost, and the empirical convergence rate vs. the accelerated theoretical rate described in \Cref{sec:method_linearization}. 

% Then, on out-of-sample data we allow existing robust and mature classical algorithms to reach certified high-accuracy solutions in few iterations.
To evaluate \name on multi-scale geometric graphs, we use a synthetic hierarchical Gaussian cluster dataset, and three image datasets. The hierarchical Gaussian cluster dataset consists of graphs whose node features are sampled from a recursive Gaussian hierarchy. For the image datasets, we consider $k$-nearest neighbor graphs generated using features from MNIST~\citep{deng2012mnist}, Fashion-MNIST~\citep{xiao2017fashion}, and CIFAR-10~\citep{cifar}.
% We generate graphs for these datasets using pretrained autoencoders as feature extractors and then constructing $k$-nearest neighbor graphs from subsampled features.
% See appendix for all details regarding datasets.

\begin{wrapfigure}[16]{r}{0.45\textwidth}
    \centering
    \vspace{-20pt}
    \includegraphics[width=0.98\linewidth]{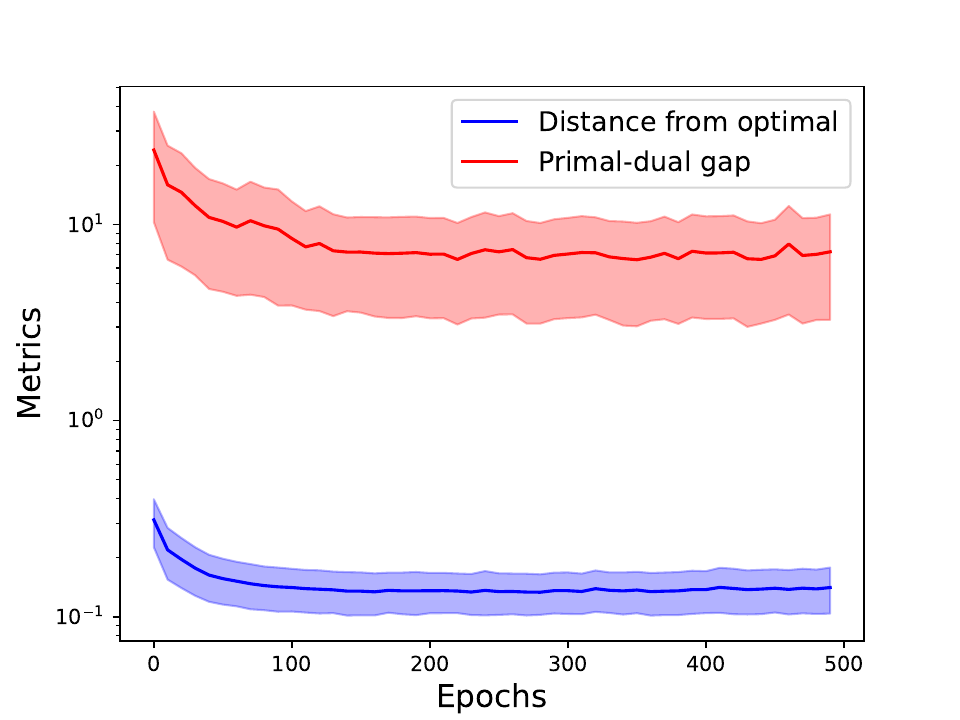}
    \caption{Tracking the primal--dual gap and the distance between the predicted and optimal primal variables throughout training. The iterate error shrinks as the gap shrinks. }
    \vspace{-30pt}
    \label{fig:pdg-vs-distance}
\end{wrapfigure}
We train \name using the primal--dual gap (\eqref{eq:pdg}).
By \Cref{cor:strong_convexity_gap_to_distance}, the primal--dual gap is an upper bound on the distance to the optimal solution, so a small gap implies that the predicted primal variables are close to optimal. Minimizing the primal--dual gap allows us to train the model in an unsupervised manner without requiring exact solutions or step-by-step algorithmic supervision. 
We observe that this theoretical relationship is reflected empirically. See \Cref{fig:pdg-vs-distance}. We also note that our theoretical results assume \emph{ergodic} iterates. In our experiments we use the last layer readouts.

% often solved repeatedly across related instances, graph sizes, or regularization parameters. 
% \sam{Chester: can you say a few words here about why warmstarting is important/lead to faster/better algorithms? }
% \chester{
% Warm-starting is important because graph-structured optimization problems are  often solved repeatedly across related instances, graph sizes, or regularization parameters. Classical solvers such as PDHG and semismooth Newton methods exhibit runtime that depends nontrivialy on the quality of the initial primal and dual variables. For example, this dependence is explicit in the PDHG bound in Theorem~\ref{thm:pdhg_convergence}, where the constant multiplying the $\mathcal{O}(1/N)$ rate contains the initial distances $\lVert u^{(0)}-u^\star \rVert_S^2$ and $\lVert p^{(0)}-p^\star \rVert_T^2$. Our goal with \name{} is to amortize high-quality initializations over a distribution of related graphs by producing primal--dual variables that are already near the relevant saddle point or have approximately identified the correct fused-edge structure. Then, on out of sample data we allow existing robust and mature classical algorithms to reach certified high-accuracy solutions in few iterations.
% }

We compare against several standard message-passing GNN baselines: GCN~\citep{kipf2017semi}, GAT~\citep{velickovic2018graph}, GIN~\citep{xu2019powerful}, GraphSAGE~\citep{hamilton2017inductive}, and EdgeConv~\citep{wang2019dynamic}. For each baseline, we keep the same encoder--processor--decoder template and replace only the \name{} processor with the corresponding GNN processor. All models were trained using the same primal--dual gap objective and experimental protocol.

\begin{figure}[b]
    \centering
    \includegraphics[width=0.31\linewidth]{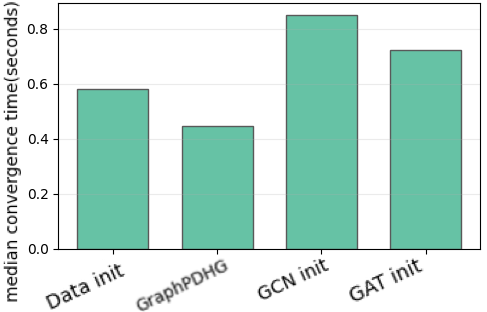}
    \raisebox{0.6em}{%
        \includegraphics[width=0.3\linewidth]{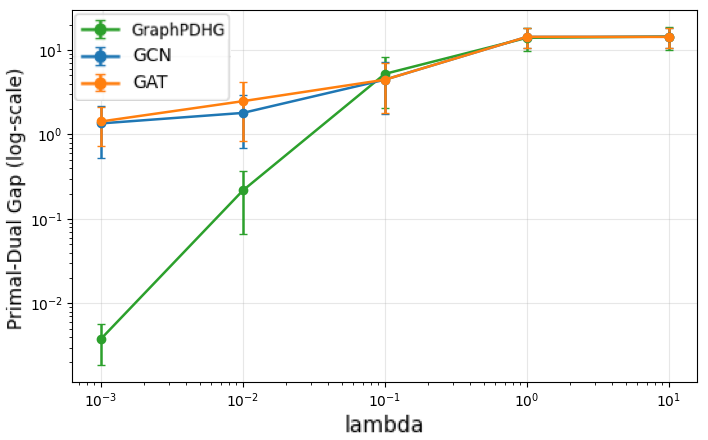}
    } 
    \raisebox{0.8em}{%
        \includegraphics[width=0.3\linewidth]{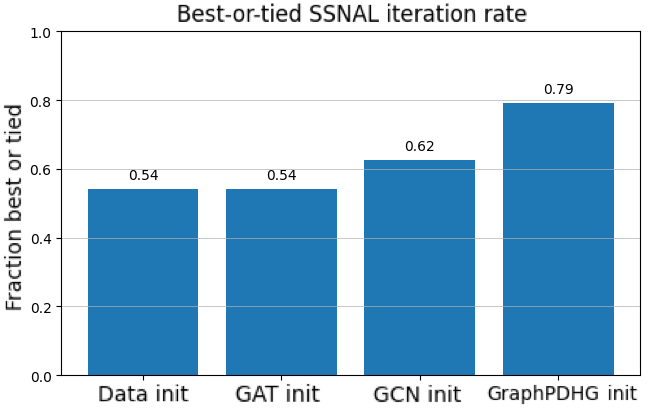}
    }
    \caption{Warm-starting SSNAL for the hierarchical Gaussian dataset. \textbf{Left:} Time (seconds) to recover a primal--dual gap $\leq 10^{-6}$. \textbf{Middle:} The primal--dual gap of different network initializations as $\lambda$ increases. \textbf{Right:} For each initialization method, the fraction of validation graphs for which SSNAL reaches a primal--dual gap of $\leq 10^{-6}$ at least as fast as the best-performing method.
}
    \label{fig:runtime-hierarchical-gaussians}
    \vspace{-0.2cm}
\end{figure}

\paragraph{Initializing a High-Accuracy Solver with \name{}.}
We evaluate \name as a learned warm start for the semismooth Newton augmented Lagrangian method (SSNAL), a state-of-the-art second-order solver for convex clustering~\citep{sun2021ssnal}. Rather than replacing SSNAL, \name provides initial primal--dual variables that are passed to the classical solver. We measure whether this initialization reduces the iterations required to reach a target primal--dual gap, indicating that the network has learned useful problem structure such as the approximate fused or active dual-edge variables. We note that although \name is aligned with PDHG, SSNAL is a robust high-accuracy solver whose performance benefits from an initialization that is close to the optimal primal--dual solution. Thus, we focus our experiments on SSNAL.

The results are summarized for the hierarchical Gaussian dataset in \Cref{fig:runtime-hierarchical-gaussians}. \name provides a better warm-start than both simply initializing from the data or initializing from the baseline GNN outputs. The performance converges at higher $\lambda$ as all methods snap to the trivial solution (i.e., the grand mean in the center).
The results for the image datasets are shown in \Cref{tab:first-hit-times} where we record the number of iterations needed to reach different target primal--dual gap values. We show \name initialization reduces the number of iterations SSNAL needs to converge. 

\begin{table}[t]
\centering
\scriptsize
\setlength{\tabcolsep}{4.5pt}
\caption{First-hit times for different initialization methods. Entries report mean $\pm$ standard deviation over runs. The best result is bold. Median hit times and ablations are reported in the Appendix.}
\label{tab:first-hit-times}
    \begin{tabular}{llcccccc}
    \toprule
    & & \multicolumn{3}{c}{$\lambda = 2.5$} 
      & \multicolumn{3}{c}{$\lambda = 10$} \\
    \cmidrule(lr){3-5} \cmidrule(lr){6-8}
    \textbf{Dataset} & \textbf{Method}
    & $10^{-4}$ & $10^{-6}$ & $10^{-8}$
    & $10^{-4}$ & $10^{-6}$ & $10^{-8}$ \\
    \midrule
    \textbf{MNIST}
    & Data  & $2.0 \pm 0.0$  & $11.6 \pm 4.8$ & $11.6 \pm 4.8$ 
            & $2.0 \pm 0.0$ & $6.4 \pm 2.6$ & $6.4 \pm 2.6$ \\
    & GIN   & $2.0 \pm 0.0$  & $12.0 \pm 5.8$ & $12.0 \pm 5.8$
            & $2.2 \pm 0.3$ & $8.7 \pm 4.8$ & $8.7 \pm 4.8$ \\
    & GCN   & $2.0 \pm 0.0$  & $13.4 \pm 6.8$ & $13.4 \pm 6.8$
            & $2.2 \pm 0.2$ & $8.5 \pm 4.7$ & $8.5 \pm 4.7$ \\
    & GAT   & $2.0 \pm 0.0$  & $7.0 \pm 3.4$  & $7.0 \pm 3.4$
            & $2.4 \pm 0.2$ & $9.1 \pm 5.1$ & $9.1 \pm 5.1$ \\
    & GraphSAGE & $2.0 \pm 0.0$ & $8.8 \pm 4.5$ & $8.8 \pm 4.5$
                & $2.2 \pm 0.2$ & $7.4 \pm 3.8$ & $7.4 \pm 3.8$ \\
    & EdgeConv  & $2.0 \pm 0.0$ & $6.4 \pm 3.0$ & $6.4 \pm 3.0$
                & $2.1 \pm 0.2$ & $6.4 \pm 2.9$ & $6.4 \pm 2.9$ \\
    & \name & $2.0 \pm 0.0$  & $\mathbf{4.7 \pm 2.1}$  & $\mathbf{4.7 \pm 2.1}$
            & $\mathbf{2.0 \pm 0.0}$ & $\mathbf{5.6 \pm 2.3}$ & $\mathbf{5.6 \pm 2.3}$ \\
    \midrule
    \textbf{Fashion-MNIST}
    & Data  & $5.4 \pm 0.4$ & $12.3 \pm 4.9$ & $12.3 \pm 4.9$
            & $5.2 \pm 0.4$ & $11.4 \pm 4.4$ & $11.4 \pm 4.4$ \\
    & GIN   & $5.5 \pm 0.6$ & $14.5 \pm 6.8$ & $14.5 \pm 6.8$
            & $5.3 \pm 0.7$ & $14.0 \pm 6.8$ & $14.0 \pm 6.8$ \\
    & GCN   & $5.4 \pm 0.6$ & $13.6 \pm 6.5$ & $13.6 \pm 6.5$
            & $5.2 \pm 0.6$ & $13.1 \pm 6.3$ & $13.1 \pm 6.3$ \\
    & GAT   & $6.7	 \pm 1.2$ & $17.8 \pm 8.1$ & $17.8 \pm 8.1$
            & $6.4 \pm 1.3$ & $21.3 \pm 9.9$ & $21.3 \pm 9.9$ \\
    & GraphSAGE & $5.3 \pm 0.5$ & $12.0 \pm 5.2$ & $12.0 \pm 5.2$
                & $5.1 \pm 0.5$ & $11.8 \pm 5.0$ & $11.8 \pm 5.0$ \\
    & EdgeConv  & $5.2 \pm 0.5$ & $10.8 \pm 4.7$ & $10.8 \pm 4.7$
                & $5.0 \pm 0.5$ & $10.6 \pm 4.8$ & $10.6 \pm 4.8$ \\
    & \name & $\mathbf{5.1 \pm 0.4}$ & $\mathbf{9.9 \pm 4.3}$ & $\mathbf{9.9 \pm 4.3}$
            & $\mathbf{4.9 \pm 0.4}$ & $\mathbf{9.4 \pm 4.2}$ & $\mathbf{9.4 \pm 4.2}$ \\
    \midrule
    \textbf{CIFAR-10}
    & Data  & $\mathbf{2.3 \pm 0.4}$ & $5.8 \pm 2.4$ & $5.8 \pm 2.4$
            & $2.2 \pm 0.4$ & $5.4 \pm 2.1$ & $5.4 \pm 2.1$ \\
    & GIN   & $2.5 \pm 0.6$ & $6.8 \pm 4.0$ & $6.8 \pm 4.0$
            & $2.4 \pm 0.6$ & $6.5 \pm 3.7$ & $6.5 \pm 3.7$ \\
    & GCN   & $2.5 \pm 0.6$ & $6.3 \pm 3.7$ & $6.3 \pm 3.7$
            & $2.4 \pm 0.6$ & $6.1 \pm 3.4$ & $6.1 \pm 3.4$ \\
    & GAT   & $2.5 \pm 0.6$ & $12.5 \pm 6.6$ & $12.5 \pm 6.6$
            & $2.4 \pm 0.6$ & $15.0 \pm 7.3$ & $15.0 \pm 7.3$ \\
    & GraphSAGE & $2.4 \pm 0.5$ & $6.0 \pm 3.0$ & $6.0 \pm 3.0$
                & $2.3 \pm 0.5$ & $5.9 \pm 2.9$ & $5.9 \pm 2.9$ \\
    & EdgeConv  & $\mathbf{2.3 \pm 0.4}$ & $5.5 \pm 2.3$ & $5.5 \pm 2.3$
                & $2.2 \pm 0.4$ & $5.2 \pm 2.2$ & $5.2 \pm 2.2$ \\
    & \name & $\mathbf{2.3 \pm 0.4}$ & $\mathbf{5.1 \pm 1.9}$ & $\mathbf{5.1 \pm 1.9}$
            & $\mathbf{2.1 \pm 0.4}$ & $\mathbf{4.8 \pm 1.8}$ & $\mathbf{4.8 \pm 1.8}$ \\
    \bottomrule
    \end{tabular}
    \vspace{-0.1cm}
\end{table}
\begin{figure}[t]
    \centering

    \centering
    \includegraphics[width=0.245\linewidth]{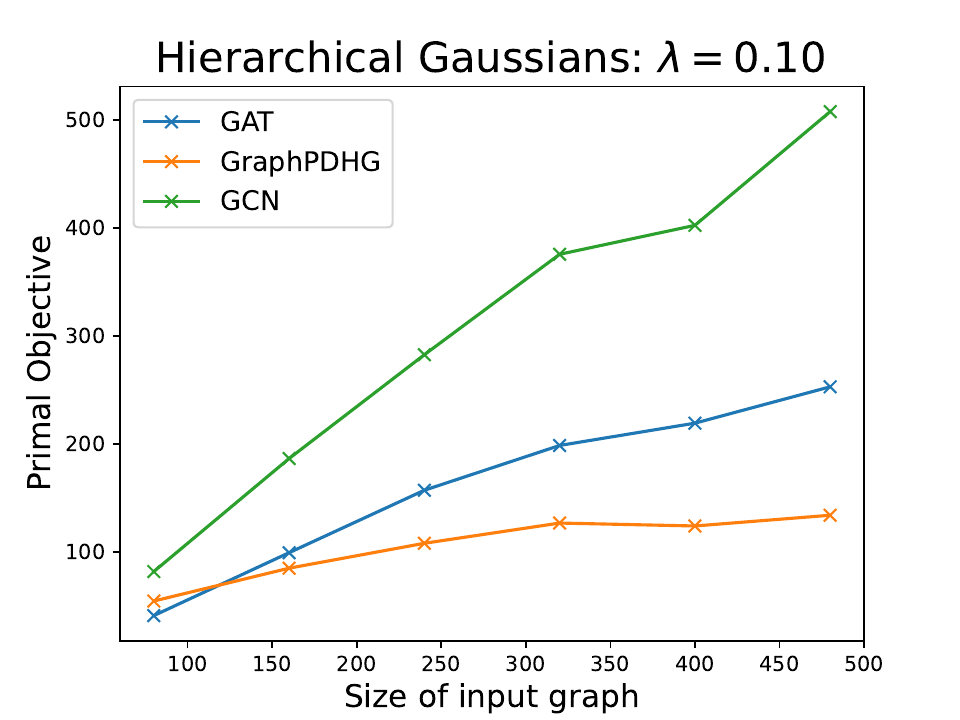}
    \includegraphics[width=0.245\linewidth]{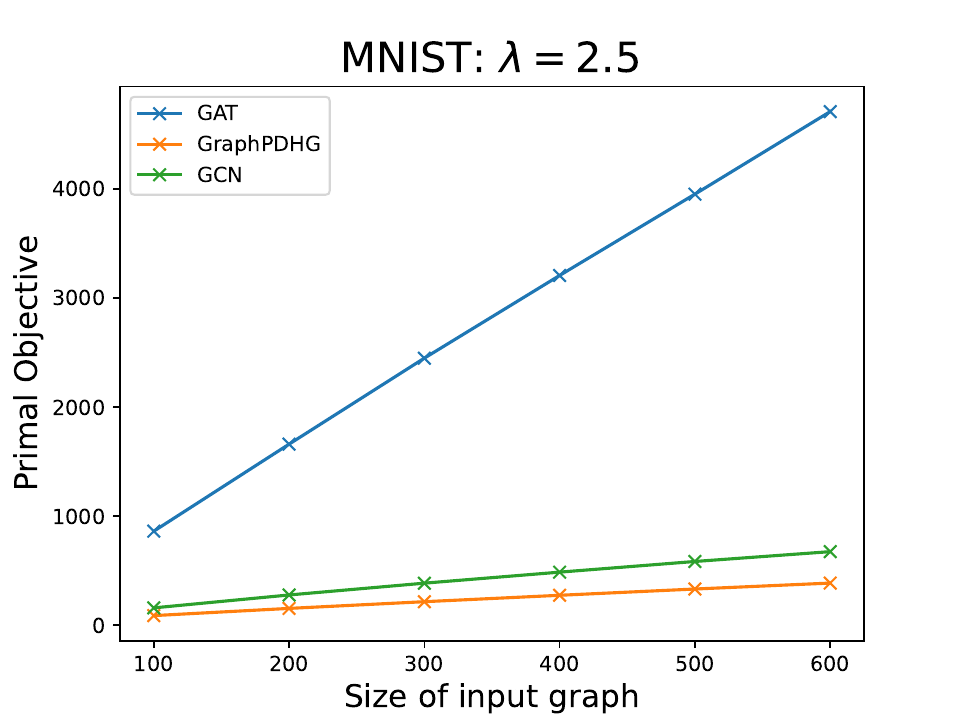}
    \includegraphics[width=0.245\linewidth]{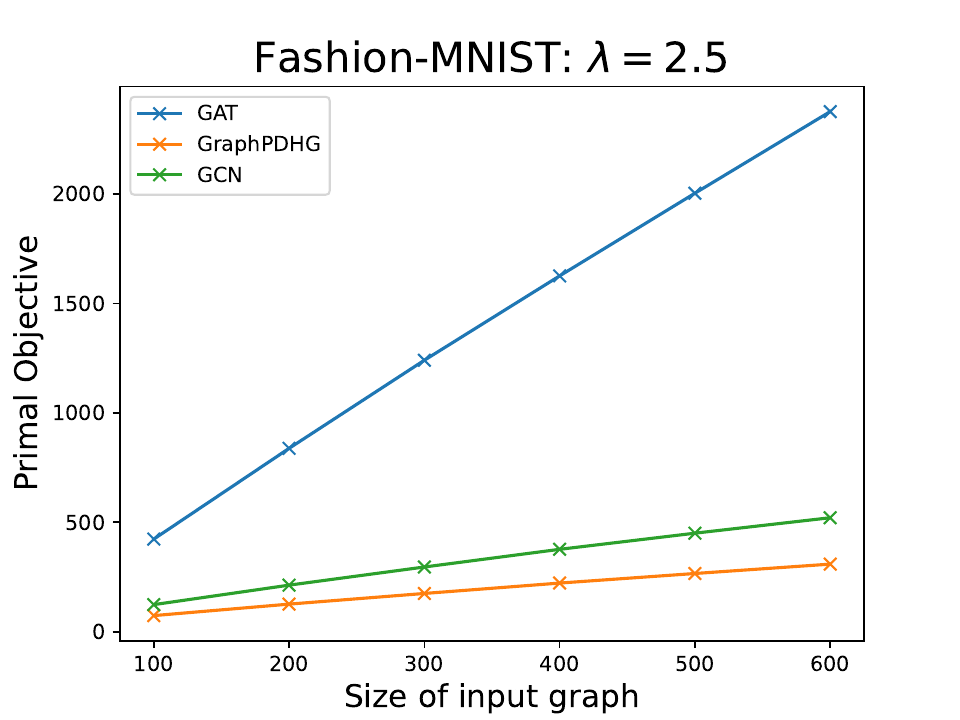}
    \includegraphics[width=0.245\linewidth]{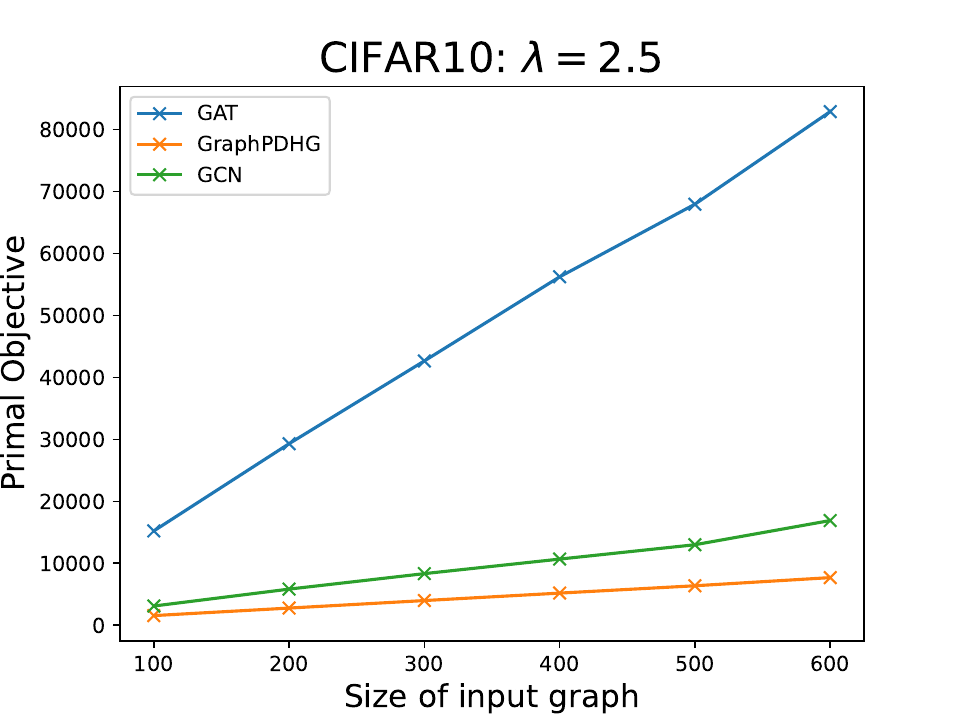}

    \caption{
    % \gal{fonts are too small in the plots. } 
    Size generalization for each dataset. We track the primal objective as the size of the test graphs increases for all datasets. Note that we test at a fixed $\lambda$ for all graphs. 
    }
    \label{fig:size-generalization}
\end{figure}
\paragraph{Size generalization.} We evaluate the size generalization of all neural network methods by measuring the primal objective value of the predicted initialization from each network on increasingly large graphs. The results are summarized in \Cref{fig:size-generalization} across all datasets. For all datasets, \name{} achieves substantially lower primal objective values than the GAT and GCN baselines as the graph size increases. This suggests that \name{} generalizes more effectively to larger problem instances. We attribute this behavior to the alignment of \name with PDHG. By explicitly mirroring the structure of PDHG, \name{} appears to learn a general rule for initialization that transfers more robustly than GAT or GCN across graph sizes. 

% \noindent{\textbf{Ablations.}} In \Cref{tab:init-primal-dual-gap} in the appendix we report the primal–dual gap of each initialization. \name achieves the lowest gap across all datasets, while both fixed Unrolled PDHG and \name without the edge projection/normalization step perform significantly worse, suggesting that the gains come from combining primal–dual message passing with the dual-feasibility projection.

\section{Conclusion}

In this paper, we introduced \name, a primal--dual message-passing framework for graph-structured saddle-point problems. The central observation is that PDHG has the same computational structure as node-edge message passing. Theoretically, we showed that the proposed architecture can exactly represent preconditioned PDHG updates and that its linearized dynamics admit interpretations in terms of the Laplacian spectrum and acceleration. Empirically, we demonstrated the effectiveness of \name on convex clustering problems, where the model improves size generalization and provides useful warm starts for classical solvers. These results suggest that learned primal--dual message passing can serve not only as a standalone approximation method, but also as a mechanism to accelerate classical optimization algorithms.

\subsection*{AI use statement}

In this work, we used generative AI tools for editing and writing tasks. Additionally, while the experimental pipeline was written by the authors without the use of AI tools, we used generative AI tools running experiments and writing code to create figures. We have reviewed all AI-assisted work. All LLM-generated code was verified and tested for correctness by two authors. We take responsibility for the final content of this work,
including text, claims or artifacts produced with the aid of generative AI.

% \subsubsection*{Author Contributions}
% If you'd like to, you may include  a section for author contributions as is done
% in many journals. This is optional and at the discretion of the authors.

% \subsubsection*{Acknowledgments}
% Use unnumbered third level headings for the acknowledgments. All
% acknowledgments, including those to funding agencies, go at the end of the paper.

\bibliography{main}
\bibliographystyle{plain}

\clearpage

\appendix
\section{Appendix}

\subsection{Convex clustering}

% \noindent \textbf{Convex Clustering and Network Lasso. }
%

Many familiar models arise as instances of \eqref{eq:main_P}. For example, choosing the function
$
    F(z) = \sum_{e\in E} \lambda_e \|z_e\|_2
$
(or its square) for edge-wise weights $\lambda_e = \lambda \sqrt{w_e}$ yields a penalty $F(Ku) = \lambda\sum_e  w_e \|u_i - u_j\|_2$ used in semi-supervised learning, network lasso, and convex clustering. 
Similarly, choosing $F(z) = \sum_{e \in E} \lambda_e\lvert z_e\rvert$ (or other $\ell_p$ norms) yields a graph total-variation penalty that appears in graph cuts~\cite{robertson2025robustgraphbasedsemisupervisedlearning} and graph TV denoising. 

In this paper, we focus on the particular case of convex clustering.
Convex clustering~\cite{chi2025convexclustering} is a continuous relaxation of clustering in which each data point $x_i\in\mathbb{R}^d$ is assigned a learnable representative (i.e., centroid), $u_i\in\mathbb{R}^d$. 
The objective balances two goals: each centroid should remain close to its original data point, while centroids connected by large graph weights should be encouraged to coincide via a nonsmooth term in the objective. 
The regularization parameter $\lambda$ controls this tradeoff. For small $\lambda$, the centroids remain close to the input points; as $\lambda$ increases, neighboring centroids fuse into common representatives, producing a hierarchy of coarser clusterings. This solution path is illustrated in \Cref{fig:convexclustering}.

More concretely, classical convex clustering (and similarly Network Lasso) of points $x_1,\dots,x_n \in \mathbb{R}^d$ can be written as a special case of \eqref{eq:main_P}. We illustrate this in \Cref{tab:notation}. Let the graph $\mathcal{G}=(V,E,w)$ have vertices $V = \{1,\dots,n\}$, edge weights $w_e > 0$ for $e=(i,j)\in E$. Define
\begin{align*}
    G(u) &= \frac{1}{2}\sum_{i\in V} \|u_i - x_i\|_2^2, \quad
    F(z) = \sum_{e\in E} \lambda_e \|z_e\|_2,
\end{align*}
where $\lambda_e := \lambda \sqrt{w_e}$. Then \eqref{eq:main_P} becomes
\begin{equation}%\label{eq:convexclustering}
    \min_{u \in \mathbb{R}^{|V|\times d}} \frac12\sum_{i\in V}\|u_i - x_i\|_2^2
           + \lambda \sum_{(i,j)\in E} w_{ij} \|u_i - u_j\|_2,
\end{equation}
which is the standard convex clustering objective (with $u_i$ the cluster centroids). Here, $G$ can be thought of as the reconstruction error and $F$ as the regularity penalty. Generally, \eqref{eq:convexclustering} is solved using variable substitution or splitting methods~\citep{chi2015splitting} or dual projected gradient methods~\citep{pi2021dualconvexclustering}. Recently, semismooth second-order methods have also been proposed~\citep{sun2021ssnal}. In the main text of our paper, we derive the PDHG updates for this problem and demonstrate how these updates are computationally amenable to local message passing.
\begin{figure}[t]
    \centering
    \includegraphics[width=0.8\linewidth]{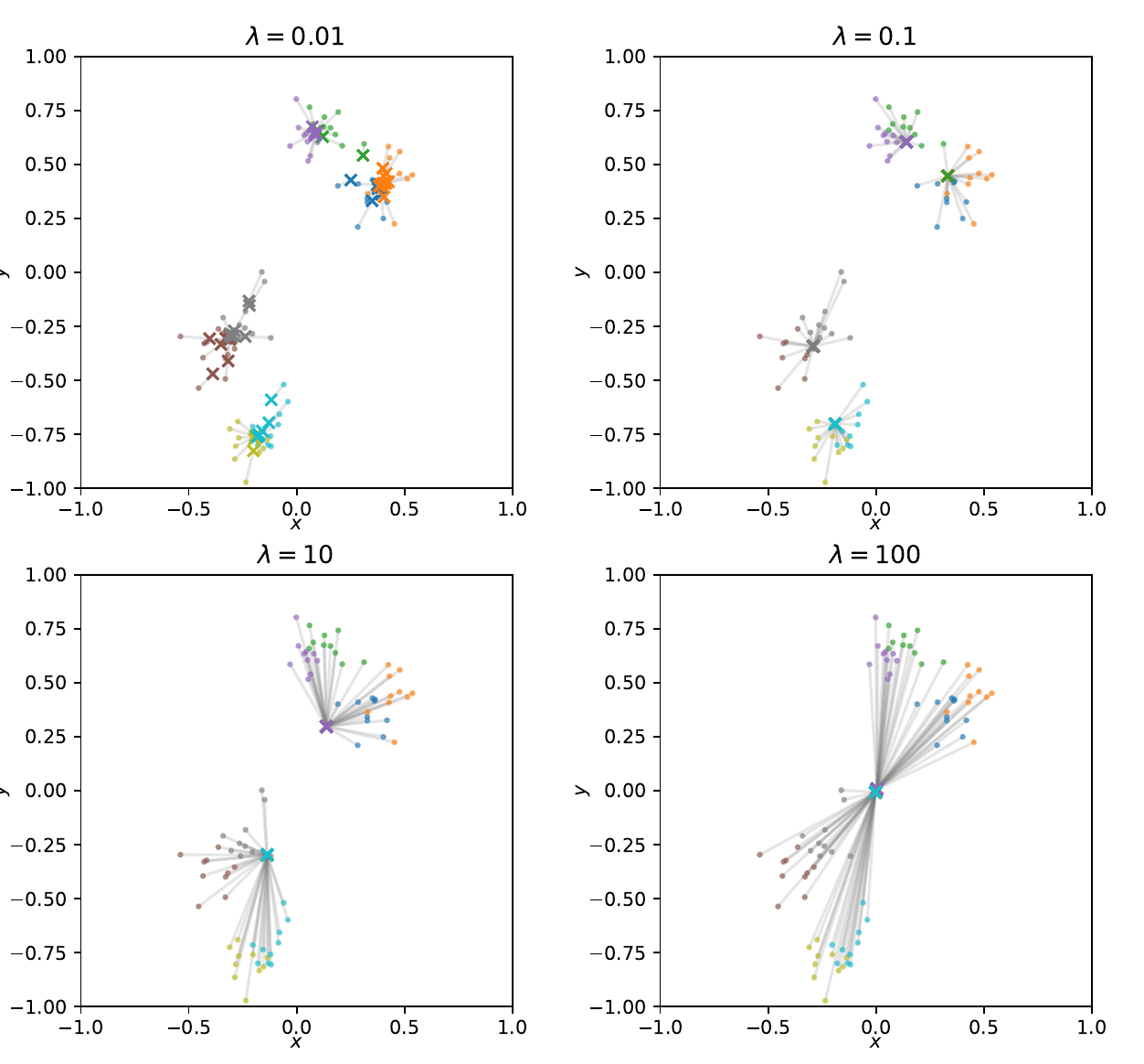}
    \caption{Solutions for convex clustering~(\eqref{eq:convexclustering}) as $\lambda$ increases on a $k$-NN graph for hierarchical Gaussians.}
    \label{fig:convexclustering}
    %\vspace{-0.7cm}
    % \vspace{-0.8cm}
\end{figure}
\subsection{Related Work}
\label{sec:related_work}

\paragraph{Optimization problems on graphs and primal--dual algorithms.}
Graph-structured optimization problems date back 20 years. The fused lasso is based on total-variation (TV) distances on a chain graph~\citep{tibshirani2005fused}, while the network lasso extends TV-style coupling to general graphs for prediction and clustering~\citep{hallac2015network}. Closely related are the family of
convex clustering problems on point clouds~\citep{chi2015splitting, chen2015convexclustering, sun2021convexclustering}. These problems introduce continuous representatives, ``centroids" for each sample and promote fusion along a similarity graph. Related applications include optimal transport on graphs, including linear, nonlinear, and constrained variants~\cite{peyré2020computationaloptimaltransport}\textemdash and two-player matrix games on graphs~\cite{Bapat1997}.
On the learning side, TV-based semi-supervised learning on graphs has been studied both statistically and algorithmically~\citep{jung18, jung19}, and TV-inspired GNN layers have been used as inductive biases for clustering and pooling~\citep{hansen2023total}.

A complementary body of work studies nonconvex penalties as surrogates for $\ell_0$-type objectives, including Smoothly Clipped Absolute Deviation (SCAD)~\citep{fan2001scad} and the Minimax Concave Penalty (MCP)~\citep{zhang2010mcp}.
Algorithmically, coordinate-descent and proximal methods for these penalties have been studied in depth~\citep{breheny2011nonconvex}, and theoretical analyses highlight settings where nonconvex regularization can achieve stronger support recovery guarantees~\citep{loh2015regularized, loh2017nonconvex}.
More recently, convex--nonconvex constructions have been proposed for graph TV to encourage edge preservation while maintaining global tractability~\citep{heng2025convexnonconvex}.

Many of these objectives can be written as composite problems of the form $\min_u G(u)+F(Ku)$ with $K$ a (weighted) incidence operator. As we will see, one can also consider an equivalent \emph{saddle point} representation of the form $\min_u \max_y$.
The Primal--Dual Hybrid Gradient (PDHG) method~\citep{chambolle2011first} is a standard first-order approach for such saddle-point formulations, and its practical efficiency often hinges on conditioning and step-size selection~\cite{applegate2021pdlp}. Notably,
combinatorial preconditioning schemes exploit graph structure to accelerate proximal algorithms for graph-TV-type problems~\citep{pmlr-v84-mollenhoff18a, pmlr-v108-ye20a}.

\paragraph{Loop unrolling and neural algorithmic reasoning.}
A large literature on \emph{learned unrolling} turns iterative optimization methods into trainable architectures by learning parameters such as step sizes, thresholds, or preconditioners~\citep{gregor2010lista,zhang2018istanet,xiediffadmm2019, yang2016deepadmm,chen2022l2o,sambharya2023qpinit}.
On graphs, GNNs have also been designed to mimic classical iterative algorithms and inherit stability/contraction properties~\citep{yang2021gnniterative}, while neural algorithmic reasoning (NAR) aims to learn execution traces of algorithms and generalize to larger inputs~\citep{velickovic2020neural, velickovic2021nar, velickovic2022clrs, nerem2025graphneuralnetworksextrapolate}.
Beyond generic NAR, duality-based formulations have been proposed to jointly learn paired algorithms (e.g., max-flow/min-cut)~\citep{numeroso2023dual}, and a recent primal--dual NAR benchmark aligns primal/dual variables with a bipartite message-passing representation~\citep{he2025pdnar}.

Most closely related to this work are learned primal--dual and PDHG-inspired networks.
PDNAR~\citep{he2025pdnar} and PDHG-Net~\citep{li2024pdhgnet} both derive unrolled primal--dual iterations for GNNs and show that sufficiently expressive parameterizations can recover classical PDHG updates. More concretely, PDNAR investigates simulation of discrete primal–dual approximation algorithms using a bipartite primal/dual representation and additional supervision obtained by running classical PDHG. PDHG-Net is closer algorithmically, but it is specialized to linear programs. In particular, PDHG-Net requires constraints specific to the LP formulations considered in~\citep{li2024pdhgnet}. The problems we consider do not naturally fit within that framework, so PDHG-Net cannot be used as a processor network in our architecture. In contrast, our framework targets generic saddle-point problems and learns directly from the primal–dual gap, without additional supervision. The mechanism is a graph-local message-passing realization of preconditioned PDHG designed for \emph{generic nonlinear and nonsmooth} objectives of the form $G(u)+F(Ku)$. We also provide theoretical results that interpret the learned updates and size generalization through the lens of preconditioning and acceleration.
In a broader optimization-on-graphs context, OptGNN demonstrates that message passing can mimic relaxation-based solvers for Max-CSPs~\citep{yau2024optgnn}, and surveys catalog many combinatorial-optimization uses of GNNs~\citep{cappart2023combinatorial}.
Our contribution is complementary: we target generic graph-structured composite convex objectives and give an explicit correspondence between standard message-passing layers and preconditioned PDHG as well as connect learned dynamics to classical acceleration ideas such as heavy-ball momentum~\citep{Polyak1964SomeMO}.

\begin{table}[t]
\centering
\caption{Notation: convex clustering as an instance of \eqref{eq:main_P}.}
\label{tab:notation}
    \begin{tabular}{lll}
        \toprule
        \textbf{Object} & \textbf{Graph interpretation} & \textbf{Convex clustering interpretation} \\
        \midrule
        $u_i$ & node variable & centroid for point $x_i$ \\
        $(Ku)_e$ & weighted edge difference & discrepancy between neighboring centroids \\
        $p_e$ & edge dual variable & fusion forcing term \\
        $K^\top p$ & aggregate incident edge feedback & net force on centroid $u_i$ from incident edges \\
        $F(Ku)$ & graph regularizer & penalty encouraging fused centroids \\
        $F^*(p)$ & dual constraint/penalty & feasible range of edge tensions \\
        $G(u)$ & node objective & fidelity / data reconstruction term \\
        \bottomrule
    \end{tabular}
\end{table}
\subsection{Other penalties and their proximal maps ($\mathsf{NORMALIZE}$)}
\begin{table}
\centering
\caption{Examples of separable convex and nonconvex penalties.
}
\label{tab:edgepenalties}
\small
\setlength{\tabcolsep}{4pt}
\renewcommand{\arraystretch}{1.1}
\begin{tabular}{@{}p{0.24\textwidth}p{0.24\textwidth}@{}}
    \toprule
    \textbf{Name} & \textbf{Penalty $F_e(z)$} \\
    \midrule
    \multicolumn{2}{@{}l}{\textbf{Convex penalties}}\\
    \addlinespace[2pt]
    Group TV ($\ell_2$) & $\lambda_e\|z\|_2$ \\
    Anisotropic TV ($\ell_1$) & $\lambda_e\|z\|_1$ \\
    Max-norm TV ($\ell_\infty$) & $\lambda_e\|z\|_\infty$ \\
    \addlinespace[3pt]
    \multicolumn{2}{@{}l}{\textbf{Nonconvex penalties}}\\
    \addlinespace[2pt]
    $\ell_0$ & $\lambda_e\mathbf{1}\{z\neq 0\}$ \\
    Clipped $\ell_1$\citep{zhang2010clippedlasso} & $\lambda_e\sum_{j=1}^d \min(|z_j|,t)$ \\
    SCAD \citep{fan2001scad} & $\lambda_e\mathrm{SCAD}_{a}(\|z\|_2)$ \\
    MCP \citep{zhang2010mcp} & $\lambda_e\mathrm{MCP}_{\gamma}(\|z\|_2)$ \\
    \bottomrule
\end{tabular}
\vspace{-0.5cm}
\end{table}
Because the edge penalty is separable,
$$
    F(z)=\sum_{e\in E}F_e(z_e),
$$
both $F^*$ and its proximal map are also separable across edges.

Therefore, the normalization operator
$$
    \mathsf{NORMALIZE}(y)=\operatorname{prox}_{\sigma F^*}(y)
$$
acts edgewise, and it suffices to compute the map for a single edge.

In this section we suppress the edge index and write $F$ for a generic penalty on $\mathbb R^{d}$. See \Cref{tab:edgepenalties} for several examples of convex and non-convex penalties. We derive associated proximal below.

\subsubsection{Convex penalties}
For any norm $\lVert \cdot \rVert$ with dual norm $\lVert \cdot \rVert_*$, consider
$$
    F(z) = \lambda \lVert z \rVert.
$$
Its convex conjugate is the indicator onto the dual norm ball~\cite{parikh2014proximal}:
$$
    F^*(p) = \mathbb{I}_{\{\lVert p \rVert_* \leq \lambda\}} (p)
$$
Consequently,
$$
    \operatorname{prox}_{\sigma F^*}(y) =
    \arg\min_{p}\frac12\|p-y\|_2^2+\sigma \mathbb I_{\{\|p\|_*\le \lambda\}}(p) = \text{Proj}_{\{\|p\|_*\le \lambda\}}(y),
$$
which is simply the Euclidean projection onto the dual norm ball and is independent of $\sigma$.

\paragraph{$\ell_2$ penalty}
Let $F(z)=\lambda \lVert z\rVert_2$. Then, $F^*(p)=\mathbb I_{\{\|p\|_2\le \lambda\}}(p)$, and
$$
    \mathsf{NORMALIZE}(y) = \text{Proj}_{\{\|p\|_2\le \lambda\}}(y) = 
    \begin{cases}
        y, & \|y\|_2\le \lambda,\\[3pt]
        \dfrac{\lambda}{\|y\|_2}y, & \|y\|_2>\lambda.
    \end{cases}
$$

\paragraph{$\ell_1$ penalty}
Let $F(z)=\lambda \lVert z\rVert_1$. Then, $F^*(p)=\mathbb I_{\{\|p\|_\infty\le \lambda\}}(p)$, and the $\mathsf{NORMALIZE}$ map is the coordinatewise clipping
$$
    [\mathsf{NORMALIZE}(y)]_j = \text{Proj}_{\{\|p\|_\infty\le \lambda\}}(y) =  \operatorname{sign}(y_j)\min(|y_j|,\lambda),
\:\: j=1,\dots,d.
$$

\paragraph{$\ell_\infty$ penalty}
Let $F(z)=\lambda \lVert z\rVert_\infty$. Then, $F^*(p)=\mathbb I_{\{\|p\|_1\le \lambda\}}(p)$, and
$$
    \mathsf{NORMALIZE}(y) = \text{Proj}_{\{\|p\|_1\le \lambda\}}(y).
$$

\subsubsection{Nonconvex penalties}

The previous rules rely on convex duality. For nonconvex penalties, the standard convex-conjugate PDHG formulation no longer applies directly. Here, we define the $\mathsf{NORMALIZE}$ map using the direct proximal map
$$
    \mathsf{NORMALIZE}^+(y):=\operatorname{prox}_{\tau F}(y).
$$
Below we describe two simple examples.

\paragraph{$\ell_0$ penalty}

Consider
$$
    F(z)=\lambda\mathbf 1\{z\neq 0\}.
$$
The proximal problem is 
$$
    \min_{u}\frac{1}{2\tau}\lVert u-y\rVert_2^2+\lambda\mathbf 1\{u\neq 0\}.
$$
The solution set  consists of two relevant candidates: $u = 0$ and $u = y$. Their objective values are
$$
    \frac{1}{2\tau}\|y\|_2^2 \text{ and } \lambda,
$$
Therefore, the proximal map is the thresholding rule:
$$
    \operatorname{prox}_{\tau \lambda \mathbf 1\{\cdot\neq 0\}}(y)
    =
    \begin{cases}
        0, & \lVert y\rVert_2\leq \sqrt{2\tau\lambda},\\[3pt]
        y, & \lVert y\rVert_2>\sqrt{2\tau\lambda}.
    \end{cases}
$$

\paragraph{Clipped $\ell_1$}
Let
$$
    F(z)=\lambda\sum_{j=1}^d \min(|z_j|,t).
$$
Since $F$ is separable, the proximal map acts coordinatewise. The resulting scalar prox is a soft-thresholding rule up to the clipping level, and equals the identity beyond that level:
$$
    [\mathsf{NORMALIZE}^+(y)]_j=
    \begin{cases}
        0, & |y_j|\le \tau\lambda,\\[4pt]
        \operatorname{sign}(y_j)\bigl(|y_j|-\tau\lambda\bigr),
        & \tau\lambda<|y_j|\le t+\dfrac{\tau\lambda}{2},\\[8pt]
        y_j, & |y_j|> t+\dfrac{\tau\lambda}{2}.
    \end{cases}
$$

\paragraph{MCP}
Let
$$
    F(z)=\mathrm{MCP}_{\lambda,\gamma}(\|z\|_2),
$$
where
$$
    \mathrm{MCP}_{\lambda,\gamma}(r)=
    \begin{cases}
        \lambda r-\dfrac{r^2}{2\gamma}, & 0\le r\le \gamma\lambda,\\[8pt]
        \dfrac{\gamma\lambda^2}{2}, & r>\gamma\lambda.
    \end{cases}
$$
Because $F$ is radial, $\operatorname{prox}_{\tau F}(y)$ is collinear with $y$, so the problem reduces to a one-dimensional minimization in the radius. This gives the group firm-thresholding map
$$
    \mathsf{NORMALIZE}^+(y)=
    \begin{cases}
        0, & \|y\|_2\le \tau\lambda,\\[8pt]
        \dfrac{\gamma}{\gamma-\tau}
        \left(1-\dfrac{\tau\lambda}{\|y\|_2}\right)y,
        & \tau\lambda<\|y\|_2\le \gamma\lambda,\\[12pt]
        y, & \|y\|_2>\gamma\lambda,
    \end{cases}
    \qquad (\tau<\gamma).
$$

\subsection{Additional Experiments and Experimental Details}
All models were constructed using PyTorch and PyTorch-Geometric. Each model was trained for 500 epochs on a training set consisting of 1000 graphs with 100 nodes each. All models were trained on 8 NVIDIA RTX A6000 GPUs. Additionally, all models were trained using the ADAM optimizer~\citep{kingma2015adam} provided in PyTorch and trained using a learning rate of 0.001. 

We note that since the GNN baselines output node states rather than an explicit primal--dual pair, we construct a surrogate dual variable for these baselines from the predicted primal variables by taking pairwise differences across edges in order to train against the primal--dual gap (i.e. the dual variable estimate is the projected variable $p_{ij} = h_i^{(\ell)} - h_j^{(\ell)}$ where $h^{(\ell)}$ are learned node features). 
These baselines test whether the gains of \name{} can be explained by generic message passing alone. GIN, GraphSAGE, GCN, and GAT are node-state message-passing architectures with different aggregation mechanisms, while EdgeConv uses pairwise edge messages based on local feature differences.

\subsubsection{Detailed dataset descriptions.} 
Here, we give more detailed exposition on how each graph dataset was constructed. 

\paragraph{Hierarchical Gaussian Clusters Benchmark}
To empirically evaluate \name{} on multi-scale geometric graphs, we introduce the Hierarchical Gaussian Benchmark.

The dataset $D = \{G_m, X_m\}_{m=1}^M$ consists of graphs and node features sampled from a recursive Gaussian hierarchy. Each node in the hierarchy at level $\ell$ has $b_\ell$ children. Starting from a root node $c_{\emptyset}=0$, children are sampled recursively according to
$$
    c_{u,j} \sim \mathcal{N}(c_u, s_\ell^2 I_d),
$$
where $u$ indexes a node at level $\ell-1$ and $s_\ell$ is the standard deviation associated with level $\ell$. At the leaves of the hierarchy, node features are sampled as
$$
    x_i \sim \mathcal{N}(c_\alpha, s_L^2 I_d),
$$
where $\alpha = (j_1,\ldots,j_L)$ denotes the path from the root to a leaf. Thus, each point has a hierarchical label corresponding to its full path in the latent tree, rather than a single flat cluster assignment. The total number of nodes is
$$
    |V| = n_{\mathrm{leaf}} \prod_{\ell=1}^L b_\ell,
$$
where $n_{\mathrm{leaf}}$ is the number of sampled points per leaf.

Given the sampled features, the graph topology is induced geometrically by constructing a $k$-nearest neighbor graph with Gaussian edge weights.

\paragraph{Image datasets}

%\chester{Image datasets We construct a graph over the latent feature space. We used all available data to construct the subgraphs, with n= $70,000$ nodes for MNIST and Fashion-MNIST, and $n= 60,000$ nodes for CIFAR-10. The graph was constructed as a $k$-nearest neighbor graph with Gaussian edge weights given by $w_{ij} = \exp(-4\lVert v_i-v_j\rVert^2/d_k(v_i)^2)$ , where $v_i$ are the latent variables for image $i$, and $d_k(v_i)$ is the distance in the latent space between $v_i$ and its $k$-th nearest neighbor. We used $k= 10$ in all experiments and symmetrize $W$ by replacing $W$ with $\frac{1}{2}(W + W^\top)$.}

For MNIST and Fashion-MNIST, we used variational autoencoders with 3 fully connected layers of sizes (784,400,20) and (784,400,30), respectively, followed by a symmetrically defined decoder. The autoencoder was trained for 100 epochs on each dataset. The autoencoder architecture, loss, and training are similar to \citet{Kingma2013AutoEncodingVB}. For CIFAR-10, we utilize publicly available SimCLR embeddings~\cite{simclr}.

For each dataset, we construct subgraphs over the latent feature space. We used all available data to construct the graph, with $n=70,000$ nodes for MNIST and Fashion-MNIST, and $n=60,000$ nodes for CIFAR-10. The graph was constructed as a $k$-nearest neighbor graph with Gaussian edge weights given by
\[w_{ij} =\exp\left( -4||x_i-x_j||^2/d_k(x_i)^2 \right),\]
where $x_i$ are the latent variables for image $i$, and $d_k(x_i)$ is the distance in the latent space between $x_i$ and its $k^{\rm th}$ nearest neighbor. We used $k=10$ in all experiments and symmetrize $W$. In order to generate graphs for training, we subsample graphs of $100$ nodes from the overall $k$-nearest neighbor graph. 

\subsubsection{Additional experiments on convex clustering}

In this section we provide additional results and illustrations of results presented in the main text. We provide further results on image datasets as well as an additional ablation on the edge memory component of \name{}, a comparison of the empirical convergence rate versus the theoretical convergence rate, and an accounting of the computational costs of \name{}.

\paragraph{Further results on image datasets}
In \Cref{tab:init-primal-dual-gap}, we demonstrate the effectiveness of \name for generating solutions with low primal--dual gap for each image dataset. Since all models were trained using the primal--dual gap, this measure directly measures whether the network produces a primal--dual pair that is close to satisfying the optimality conditions of the convex clustering problem. A lower initial gap is especially important for warm-starting subsequent solvers, since it indicates that the subsequent solver begins from a more informative primal--dual state rather than from a generic initialization. 

Across all datasets and both $\lambda$ levels, \name achieves the lowest mean primal--dual gap. Both GAT and GCN exhibit a lack of explicit edge-state propagation. Compared with GAT, the improvement of \name is particularly pronounced. Compared with GCN, \name consistently improves the initialization quality, reducing the mean gap by approximately $42\%$ and $22\%$ on MNIST for $\lambda=2.5$ and $\lambda=10$, respectively, and by approximately $39\%$ and $24\%$ on Fashion-MNIST. On CIFAR-10, all methods have larger variance, reflecting the greater difficulty and heterogeneity of the instances, but \name still gives the best average initialization for both regularization levels. These results support our claim that the PDHG-aligned node-edge architecture learns more useful primal--dual structure than node-only GNN baselines.

The ablations in \Cref{tab:init-primal-dual-gap} further clarify which parts of the architecture are responsible for the improvement. The ``Unrolled PDHG'' baseline tests whether simply embedding a fixed number of classical PDHG iterations is sufficient to produce a useful initialization. Although this baseline is algorithmically aligned with the target solver, it performs worse than \name in all cases and is especially unstable on Fashion-MNIST and CIFAR-10. This suggests that fixed unrolling alone does not provide enough adaptivity across graph instances; the learned message-passing parameters in \name are important for producing low-gap primal--dual states within a small number of layers.

The ``\name{} (- projection)'' ablation removes the explicit projection/normalization step in the edge update. This ablation isolates the role of maintaining dual feasibility through the PDHG-inspired edge map. Removing this projection substantially degrades the initialization quality. On MNIST, the full model reduces the mean gap relative to this ablation by approximately $64\%$ for $\lambda=2.5$ and $37\%$ for $\lambda=10$. On Fashion-MNIST, the corresponding reductions are approximately $70\%$ and $38\%$, while on CIFAR-10 they are approximately $12\%$ and $10\%$. These results indicate that the projection is not merely a cosmetic architectural constraint: it helps the network produce dual variables that are better aligned with the feasible dual geometry of convex clustering.

These ablations indicate that the performance of \name is not explained solely by using a node--edge architecture or by unrolling a known optimizer. The best results require both learned primal--dual message passing and the PDHG-inspired edge projection. This supports the central design principle of \name: preserving the algorithmic structure of PDHG while allowing the local update maps to adapt to the graph and feature distribution.

\Cref{fig:ssnal-warmstart-comparison} compares the downstream effect of different initializations on SSNAL across MNIST, Fashion-MNIST, and CIFAR-10. The left column reports the mean first-hit iteration: the first SSNAL iteration at which the primal--dual gap falls below a prescribed threshold. Thus, lower curves indicate that the initialization allows SSNAL to reach a certified solution in fewer iterations. Across all three datasets, \name yields the fastest or near-fastest convergence, with the largest gains appearing at stricter gap thresholds. The right column reports the primal--dual gap of the neural initialization before SSNAL refinement. \name produces consistently lower initial gaps than the GAT and GCN benchmarks, supporting the claim that explicitly maintaining edge-dual states yields more informative primal--dual warm starts. CIFAR-10 is more variable, but the same qualitative trend remains: \name gives a lower initial gap and reduces the number of SSNAL iterations needed for high-accuracy refinement.

\begin{table}[t]
\centering
\small
\setlength{\tabcolsep}{5pt}
\caption{Primal--dual gap at initialization on image-derived convex clustering instances. Entries report mean $\pm$ standard deviation over runs. Lower is better, and the best result for each dataset and weight $\lambda$ is bolded.}
\label{tab:init-primal-dual-gap}
    \begin{tabular}{llcc}
        \toprule
        \textbf{Dataset} & \textbf{Method}
        & $\boldsymbol{\lambda = 2.5}$
        & $\boldsymbol{\lambda = 10}$ \\
        \midrule
        \textbf{MNIST}
        & GAT      & $1931.7 \pm 1274.4$ & $2167.9 \pm 1544.4$ \\
        & GIN      & $15.0 \pm 4.5$ & $98.0 \pm 30.0$ \\
        & GCN      & $12.6 \pm 3.1$       & $89.0 \pm 24.9$ \\
        & GraphSAGE & $11.0 \pm 2.8$ & $82.0 \pm 23.0$ \\
        & EdgeConv  & $9.0 \pm 2.0$  & $74.0 \pm 18.0$ \\
        & Unrolled PDHG & $64.3 \pm 28.1$        & $143.2 \pm 59.3$ \\
        & \name (- projection) & $20.4 \pm 4.6$        & $110.4 \pm 41.3$ \\
        & \name & $\mathbf{7.3 \pm 1.3}$        & $\mathbf{69.5 \pm 14.8}$ \\
        \midrule
        \textbf{Fashion-MNIST}
        & GAT      & $700.6 \pm 541.8$ & $764.3 \pm 530.1$ \\
        & GIN      & $12.5 \pm 5.0$ & $95.0 \pm 31.0$ \\
        & GCN      & $10.0 \pm 3.5$ & $84.6 \pm 23.5$ \\
        & GraphSAGE & $8.8 \pm 3.2$ & $78.0 \pm 22.0$ \\
        & EdgeConv  & $7.3 \pm 2.4$ & $69.0 \pm 18.0$ \\
        & Unrolled PDHG & $8.7\times 10^{4} \pm 3.0\times 10^{3}$        & $9.4\times 10^{4} \pm 4.3\times 10^{3}$ \\
        & \name (- projection) & $20.4 \pm 19.1$        & $104.3 \pm 27.8$ \\
        & \name & $\mathbf{6.1 \pm 1.7}$ & $\mathbf{64.2 \pm 14.9}$ \\
        \midrule
        \textbf{CIFAR-10}
        & GAT      & $27467.9 \pm 3160.2$ & $27693.4 \pm 3256.8$ \\
        & GIN      & $3300.0 \pm 3200.0$ & $3450.0 \pm 3100.0$ \\
        & GCN      & $3098.7 \pm 3179.2$ & $3244.3 \pm 3017.8$ \\
        & GraphSAGE & $2950.0 \pm 2600.0$ & $3100.0 \pm 2550.0$ \\
        & EdgeConv  & $2750.0 \pm 1200.0$ & $2900.0 \pm 1150.0$ \\
        & Unrolled PDHG & $6.2\times 10^{6} \pm 1.4\times 10^{3}$        & $7.9\times 10^{6} \pm 1.7\times 10^{3}$ \\
        & \name (- projection) & $2978.5 \pm 74.7$        & $3057.3 \pm 83.1$ \\
        & \name & $\mathbf{2622.2 \pm 74.2}$ & $\mathbf{2741.3 \pm 79.3}$ \\
        \bottomrule
    \end{tabular}
\end{table}

\begin{figure}
    \centering
    \includegraphics[width=0.45\linewidth]{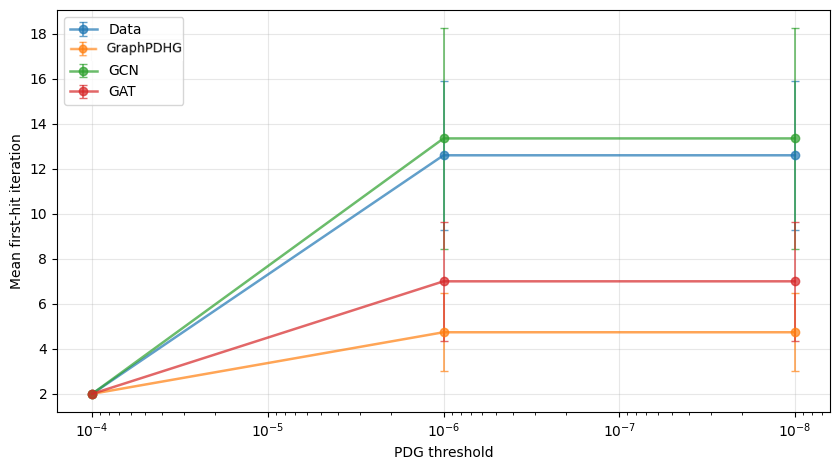}
    \includegraphics[width=0.425\linewidth]{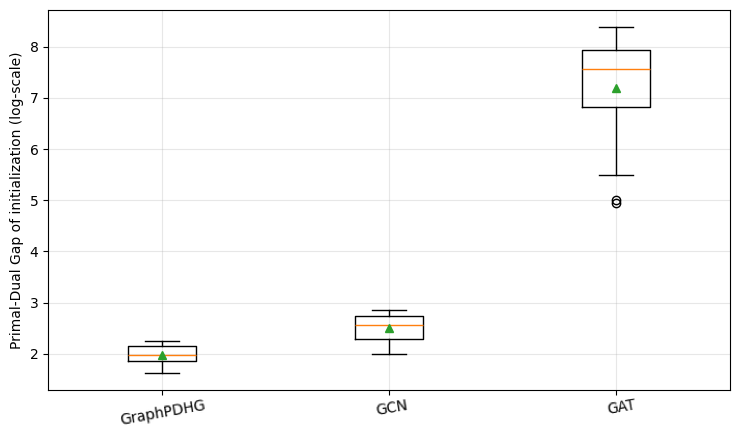}

    \includegraphics[width=0.45\linewidth]{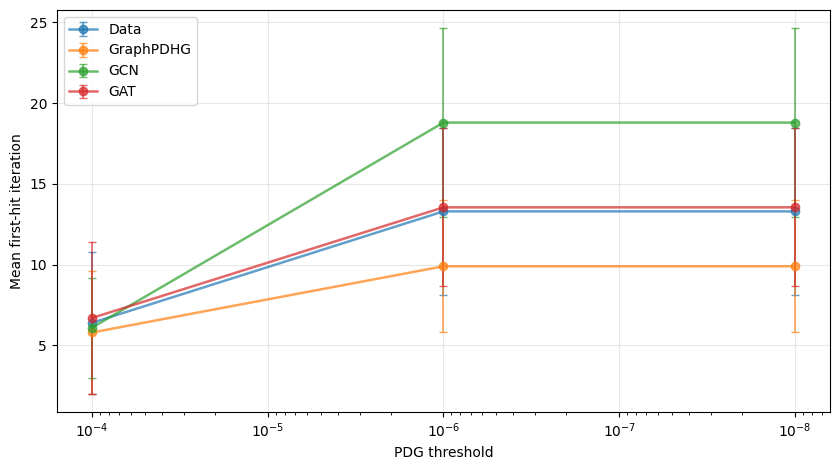}
    \includegraphics[width=0.425\linewidth]{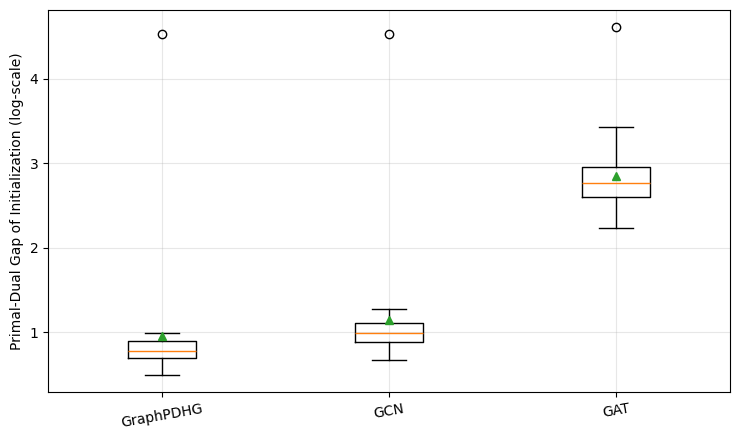}
    
    \includegraphics[width=0.45\linewidth]{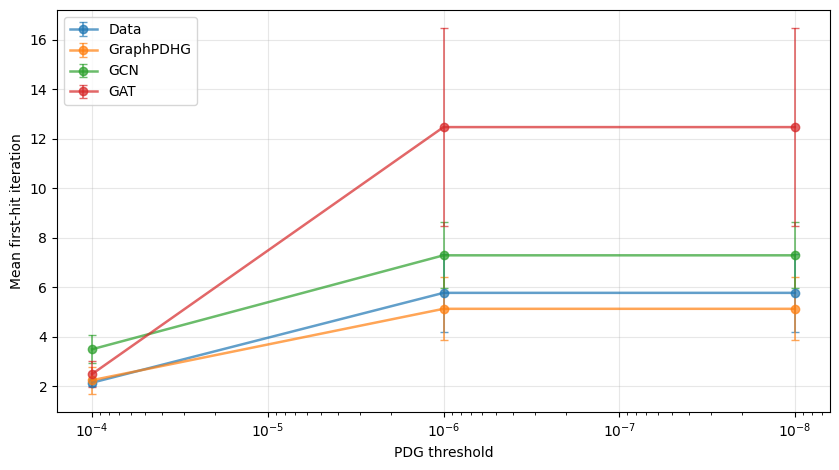}
    \includegraphics[width=0.425\linewidth]{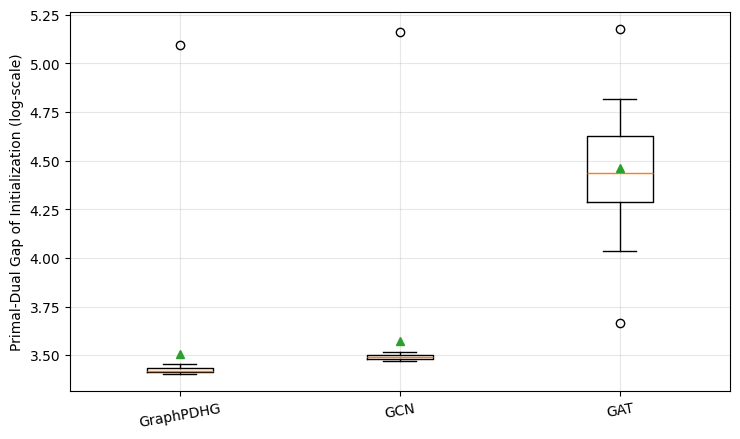}
    
    \caption{Comparing initializations. \textbf{Left Column: } Mean number of iterations of SSNAL to reach a primal--dual gap $\leq 10^{-6}$. \textbf{Right Column: } Primal-dual gap at initialization. \textbf{Top: }MNIST. \textbf{Middle: }Fashion-MNIST. \textbf{Bottom: }CIFAR-10.}
    \label{fig:ssnal-warmstart-comparison}
\end{figure} 

\paragraph{Ablations on edge memory of \name{}}

We performed additional ablations on the edge memory component of \name{} for several values of $\lambda$ on the hierarchical Gaussians dataset and record the median first hit for a threshhold of $10^{-6}$ below. By first hit, we mean the first iteration where each method reaches below the threshhold. Notice that without edge memory, \name{} have a  higher median first hit than with edge memory for lower $\lambda=0.001, 0.01$. They have a comparable median hit rate as $\lambda$ grows. However, this is because as $\lambda$ grows, the convex clustering objective collapses into a single point (the global mean) and most baselines can find this degenerate solution. Nevertheless, this ablation points to the importance of the edge component of \name{} (representing the dual variable) is essential for the performance of the method. 

\begin{table}[t]
    \centering
    \caption{Ablating the edge memory component of \name{}. We measure the median first hit times for a target primal-dual gap of $0.001$ after initializing SSNAL with \name{} with edge memory vs. \name{} without edge memory. Note that initializing SSNAL with \name{} without the edge component, the median first hit is higher than if we had initialized SSNAL with \name{} with an edge component. }
    \begin{tabular}{ccccc}
    \hline
         & $\lambda = 0.001$ & $\lambda=0.01$ & $\lambda=0.10$& $\lambda=1.0$\\
         \hline
        With edge memory & 16.0 & 41.0 & 164.0 & 176.0\\
        Without edge memory & 18.0 	& 53.0 & 161.0 & 182.0 	 \\
        \hline
    \end{tabular}
    \label{tab:edge-ablation}
\end{table}

\paragraph{Empirical convergence rate versus accelerated convergence rate}
In \Cref{tab:empirical-theoretical-gaps}, we compare the empirical primal-dual gap achieved by \name{} on a dataset of expander graphs (with $\kappa=2.53$), together with the accelerated and standard theoretical convergence rates for PDHG. Note that $k$ in \Cref{tab:empirical-theoretical-gaps} indicates the iteration number for PDHG and the corresponding processor layer for \name{}. 

We observe that on the expander graphs, \name{} reduces the median relative primal-dual gap below \(10^{-2}\) using only two processor layers, while canonical PDHG first reaches this threshold at $12$ iterations. \name{} is the top performing method through depth $14$, with the two methods becoming comparable by depth $16$. For the median full-Laplacian condition number \(\kappa=2.53\), the accelerated theoretical factor described in the main text is \(q_\kappa=0.228\), illustrating geometric decay compared with PDHG’s normalized \(1/k\) ergodic-rate shape.
We believe that the gains begin to plateau at larger values of $k$ because increasing $k$ increases the depth of the message-passing network, making optimization more difficult. In practice, GNNs are often shallow (with relatively few message-passing layers), as deeper architectures can be susceptible to oversmoothing and oversquashing (see \cite{}). Although these effects can be partially mitigated through residual connections \cite{}(which are also included in our architecture), training and optimizing deep GNNs remains challenging. We believe that these depth-related optimization issues limit additional gains as $k$ increases. 

\begin{table}[ht]
\centering
\caption{Empirical gaps and theoretical convergence rates on expanders with $\kappa = 2.53$.}
\begin{tabular}{rcccc}
\hline
$k$
& \begin{tabular}[c]{@{}c@{}}GraphPDHG\\ empirical gap\end{tabular}
& \begin{tabular}[c]{@{}c@{}}PDHG\\ empirical gap\end{tabular}
& \begin{tabular}[c]{@{}c@{}}Accelerated theory\\ $q_\kappa^k$\end{tabular}
& \begin{tabular}[c]{@{}c@{}}PDHG theory\\ $1/k$\end{tabular}
\\
\hline
1  & $1.90 \times 10^{-2}$ & $5.02 \times 10^{-1}$ & $2.28 \times 10^{-1}$  & $1.00$ \\
2  & $1.60 \times 10^{-3}$ & $3.26 \times 10^{-1}$ & $5.20 \times 10^{-2}$  & $5.00 \times 10^{-1}$ \\
4  & $6.14 \times 10^{-3}$ & $1.41 \times 10^{-1}$ & $2.71 \times 10^{-3}$  & $2.50 \times 10^{-1}$ \\
8  & $1.35 \times 10^{-3}$ & $3.06 \times 10^{-2}$ & $7.33 \times 10^{-6}$  & $1.25 \times 10^{-1}$ \\
12 & $1.48 \times 10^{-3}$ & $7.52 \times 10^{-3}$ & $1.99 \times 10^{-8}$  & $8.33 \times 10^{-2}$ \\
16 & $7.51 \times 10^{-4}$ & $6.89 \times 10^{-4}$ & $5.38 \times 10^{-11}$ & $6.25 \times 10^{-2}$ \\
\hline
\end{tabular}

\label{tab:empirical-theoretical-gaps}
\end{table}

\paragraph{Computational cost of \name{}}
Here, we provide a discussion of the computational cost of \name{}. 
 First, we note that our proposed \name{} method is linear time per forward pass (as is the vanilla PDHG algorithm). All baselines are also linear time per forward pass so there is no significant difference in their theoretical complexity. Empirically, we record the following inference and training times for each method.  

The training times recorded are for a training dataset of 1000 hierarchical Gaussian graphs with at most 100 nodes per graph and 15,000 training steps. The inference times recorded are for a single forward pass of each method on graphs with up to 100 nodes. 
\begin{table}[ht]
    \centering
     \caption{Training time and inference times for each method on hierarchical Gaussian graphs with 100 nodes. }
    \begin{tabular}{cccccccc}
\toprule
     & GraphPDHG & GCN & GAT\\
     \midrule
    Training time & 1h 20m &  1h 26m & 1h 12m \\
    Inference time (ms) & 4.367 &  2.977 & 5.602\\
    \bottomrule
\end{tabular}
   
    \label{tab:placeholder}
\end{table}

\subsection{Total Variation Denoising on Graphs}
%\sam{Chester: could you fill in the details here?}
%We include a total variation (TV) graph denoising problem as an additional example of the utility of \name{} on graph saddle point problems. 
%Here, we sample weighted PyGSP sensor graphs with 244 nodes in 2D and approximately 8-nearest-neighbor connectivity. The `signal', i.e. label, for each node is a piecewise-affine graph signal. Nodes are partitioned into four random Voronoi regions, and each region receives its own random affine function: $x_i^* = a_{r(i)}^\top p_i  + b_{r(i)}$. The resulting signal is standardized to zero mean and unit variance. The observation is $y_i = x_i^* + \varepsilon_i$, $\varepsilon \sim \mathcal{N}(0, 0.4)^2$. The prediction task is to esimate the clean scalar $x_i^*$ at every node from the noisy values $y_i$ and the graph structure. Region assignments and clean signals are not given to the graph. We impose convex TV and nonconvex capped-fusion regularization on \name{}.  Our results are summarized in \Cref{tab:tv-method-comparison}. 

We evaluate \name{} on graph-signal denoising to demonstrate its applicability beyond the original clustering experiments. We consider both convex total variation (TV) and nonconvex, nonsmooth capped-fusion regularization, and compare against classical solvers and parameter-matched neural baselines.

\paragraph{Signal generation.} We generate connected, weighted PyGSP sensor graphs with $n=244$ nodes on a randomly perturbed $12\times12$ grid in $[0,1]^2$, using PyGSP's symmetrized $k$-nearest-neighbor construction with $k=8$. To construct a clean signal, we sample four Voronoi centers uniformly in the unit square. Each region receives independent coefficients $a_r\sim\mathcal{N}(0,I_2)$ and $b_r\sim\mathcal{N}(0,0.5^2)$. For node coordinates $p_i$, we define
\begin{equation}
    s_i=a_{r(i)}^\top p_i+b_{r(i)},\quad
    x_i^\star=\frac{s_i-\bar{s}}
    {\sqrt{n^{-1}\sum_j(s_j-\bar{s})^2}},\quad
    y_i=x_i^\star+\varepsilon_i,
    \:\: \varepsilon_i\sim\mathcal{N}(0,0.4^2).
\end{equation}
Thus, each clean signal has zero mean and unit variance, and the observation noise has variance $0.16$. We use 512 training, 128 validation, and 256 test graphs. The prediction task is to recover the clean scalar signal from the noisy observations and graph. Clean signals serve as supervised training targets only; neither clean values nor region assignments are inputs at inference.

\paragraph{Reconstruction objectives.} For each undirected edge $e=(i,j)$, define the weighted incidence
operator $(Ku)_e=\sqrt{w_{ij}}(u_i-u_j)$. The two reconstruction objectives that we consider are
\begin{align}
    F_{\mathrm{TV}}(u)
    &=\tfrac12\|u-y\|_2^2+\lambda_{\mathrm{TV}}\|Ku\|_1,
    \label{eq:sensor-tv}\\
    F_{\mathrm{cap}}(u)
    &=\tfrac12\|u-y\|_2^2+
    \lambda_{\mathrm{cap}}\sum_{e\in E}\min\{|(Ku)_e|,c\}.
    \label{eq:sensor-capped}
\end{align}
The convex TV problem has the saddle-point formulation
\begin{equation}
    \min_u\max_{\|q\|_\infty\leq\lambda_{\mathrm{TV}}}
    \tfrac12\|u-y\|_2^2+\langle Ku,q\rangle.
\end{equation}
Capped fusion reduces the penalty on large jumps by saturating their contribution. 

\paragraph{Methods} We evaluate \name{} against the GNN methods described in \Cref{sec:experiments}. Additionally, we distinguish between \name{}-TV uses the TV operator throughout; \name{}-MM starts with a TV-style update and then applies four iterations of the MM (majorize-minimization) algorithm recomputed from the current reconstruction.
Note that \name{}-MM approximately solves the capped-fusion proximal problem 
$$
    \min_u \frac12\|u-y\|_2^2 +\lambda\sum_{e\in E}\min\{|(Ku)_e|,c\} 
$$
At each iteration of the proximal map, we construct an upper bound by linearizing the concave function $s\mapsto\min\{s,c\}$ at the current edge magnitudes. Defining $m_e^{(t)}=\mathbf{1}\{|(Ku^{(t)})_e|<c\}$, the resulting subproblem is
$$
    \min_u \frac12\|u-y\|_2^2 +\lambda\sum_{e\in E}m_e^{(t)}|(Ku)_e|.
$$
Each iteration of MM alternatives between applying the TV-style proximal map and updating the mask $m_e$. We also compare against vanilla PDHG with the TV operator as well as vanilla PDHG with MM updates. 

\paragraph{Results} We measure both the normalized mean squared error (NMSE) between the reconstructed signal output (from each method) and the clean signal and the signal-to-noise ratio gain (SNR gain) which measures the difference between the input noisy graph signal and the output signals from each method.  \name{}-MM achieved the lowest NMSE at 3.33e-02, a 79\% reduction from the noisy observation's 1.57e-01 and a 6.85 dB SNR improvement. \name{}-TV and GAT were close at 3.37e-02 and 3.42e-02, respectively. GraphSAGE, EdgeConv, and GCN were also competitive, with NMSE values between 3.55e-02 and 3.68e-02. For CPU inference runtime, GraphSAGE was the fastest graph-based model at 2.03 ms/graph, followed by GCN at 2.89 ms/graph. \name{}-TV and \name{}-MM required approximately 4.12 ms/graph, while EdgeConv and GAT required 5.05 and 5.43 ms/graph, respectively. PDHG-MM was considerably slower at 10.73 ms/graph.

\begin{table}[ht]
\centering
\caption{Graph signal denoising. Comparison of reconstruction methods. Uncertainties are computed as half the reported interval width.}
\label{tab:tv-method-comparison}
\begin{tabular}{lccc}
\toprule
Method & NMSE & SNR gain (dB) & CPU (ms) \\
\midrule
Noisy
& $3.922 \times 10^{-2} \pm 6.75 \times 10^{-4}$
& $0.00 \pm 0.00$
& $0.000$ \\
Heat
& $3.231 \times 10^{-2} \pm 6.85 \times 10^{-4}$
& $0.91 \pm 0.08$
& $0.816$ \\
PDHG-TV
& $1.897 \times 10^{-2} \pm 5.65 \times 10^{-4}$
& $3.17 \pm 0.15$
& $3.184$ \\
PDHG-MM
& $1.461 \times 10^{-2} \pm 9.10 \times 10^{-4}$
& $4.31 \pm 0.28$
& $11.373$ \\
GraphPDHG-TV
& $1.075 \times 10^{-2} \pm 6.20 \times 10^{-4}$
& $5.63 \pm 0.20$
& $4.354$ \\
GraphPDHG-MM
& $1.098 \times 10^{-2} \pm 5.80 \times 10^{-4}$
& $5.50 \pm 0.18$
& $4.367$ \\
MLP
& $3.779 \times 10^{-2} \pm 5.55 \times 10^{-4}$
& $0.17 \pm 0.025$
& $0.670$ \\
GCN
& $1.116 \times 10^{-2} \pm 6.90 \times 10^{-4}$
& $5.53 \pm 0.16$
& $2.977$ \\
GAT
& $1.077 \times 10^{-2} \pm 5.25 \times 10^{-4}$
& $5.55 \pm 0.155$
& $5.602$ \\
\bottomrule
\end{tabular}
\end{table}

\clearpage

\section{Proofs and additional results}
\begin{corollary}
	\label{cor:strong_convexity_gap_to_distance}
	Suppose that
	$G$ is $\mu$-strongly convex:
	$$
		G(v)\ge G(u)+\langle g, v-u\rangle+\frac{\mu}{2}\|v-u\|_F^2
		\qquad
		\forall u,v\in\mathbb{R}^{|V|\times d},\text{ and } g\in\partial G(u).
	$$
	Then
	$
		f(u):=G(u)+F(Ku)
	$
	is also $\mu$-strongly convex and admits a unique minimizer $u^\star$.
	Moreover, if $(u^\star,p^\star)$ is any saddle point of \eqref{eq:saddle_P}, then for all
	$u\in\mathbb{R}^{|V|\times d}$ and $p\in\mathbb{R}^{|E|\times d}$,
	\begin{equation}
		\label{eq:generic_gap_to_distance}
		\frac{\mu}{2}\|u-u^\star\|_F^2
		\le
		\mathcal L(u,p^\star)-\mathcal L(u^\star,p)
		\le
		\text{Gap}(u,p).
	\end{equation}
	From Theorem~\ref{thm:pdhg_convergence}, we have that the PDHG iterates satisfy
	\begin{equation}
		\label{eq:generic_ergodic_distance}
		\frac{\mu}{2}\|\bar u^{(N)}-u^\star\|_F^2
		\le
		\frac{1}{N}\left(
    		\frac{s}{2}\|u^{(0)}-u^\star\|_S^2
    		+
    		\frac{t}{2}\|p^{(0)}-p^\star\|_T^2
		\right),
	\end{equation}
	Then,
	$$
		\|\bar u^{(N)}-u^\star\|_F \le
		\left(
            \frac{s\|u^{(0)}-u^\star\|_S^2 + t\|p^{(0)}-p^\star\|_T^2}{\mu N}
		\right)^{1/2}.
	$$
\end{corollary}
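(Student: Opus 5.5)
We establish the three claims in order: strong convexity and uniqueness, the gap-to-distance inequality, and then the ergodic rate via Theorem~\ref{thm:pdhg_convergence}.

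\textbf{Strong convexity and uniqueness.} The function $F\circ K$ is convex, being a convex function composed with a linear map, so $f=G+F\circ K$ is $\mu$-strongly convex. We assume the usual qualification, so that $\partial f(u)\supseteq \partial G(u)+K^\top\partial F(Ku)$ and the subgradient inequality adds. Alternatively, we argue directly: $f-\tfrac{\mu}{2}\|\cdot\|_F^2$ equals $G-\tfrac\mu2\|\cdot\|_F^2$, which is convex, plus the convex function $F\circ K$. Since $f$ is proper, lsc and strongly convex, it is coercive. Hence it has a minimizer, and strict convexity makes that minimizer unique.

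\textbf{Gap-to-distance inequality.} Let $(u^\star,p^\star)$ be a saddle point. Then $u^\star$ minimizes $\mathcal L(\cdot,p^\star)=G(\cdot)+\langle K\cdot,p^\star\rangle-F^*(p^\star)$. This function is $\mu$-strongly convex, since it is $G$ plus an affine term. Its minimizer therefore satisfies $0\in\partial G(u^\star)+K^\top p^\star$. Taking $g=-K^\top p^\star$ in the strong convexity inequality gives
\[
\mathcal L(u,p^\star)-\mathcal L(u^\star,p^\star)\ge \tfrac{\mu}{2}\|u-u^\star\|_F^2 .
\]
The saddle property in $p$ gives $\mathcal L(u^\star,p^\star)\ge \mathcal L(u^\star,p)$. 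Adding these two inequalities yields the left inequality in \eqref{eq:generic_gap_to_distance}.

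For the right inequality, we use the standard identities $\sup_{p'}\mathcal L(u,p')=G(u)+F(Ku)=f(u)$, valid since $F=F^{**}$ for proper lsc convex $F$, and $\inf_{u'}\mathcal L(u',p)=-G^*(-K^\top p)-F^*(p)=d(p)$. Consequently
\[
\mathcal L(u,p^\star)-\mathcal L(u^\star,p)\le f(u)-d(p)=\mathrm{Gap}(u,p).
\]

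\textbf{Ergodic rate.} We apply the left inequality of \eqref{eq:generic_gap_to_distance} with $u=\bar u^{(N)}$ and $p=\bar p^{(N)}$. The quantity $\mathcal L(\bar u^{(N)},p^\star)-\mathcal L(u^\star,\bar p^{(N)})$ is exactly the left-hand side bounded in Theorem~\ref{thm:pdhg_convergence}, which gives \eqref{eq:generic_ergodic_distance}. Multiplying by $2/\mu$ and taking square roots gives the final bound.

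\textbf{Main obstacle.} The only delicate step is justifying $\partial G(u^\star)\ni -K^\top p^\star$, i.e., extracting the first-order optimality condition from the saddle-point definition. This is immediate because $\mathcal L(\cdot,p^\star)$ is convex and minimized at $u^\star$, so no constraint qualification is needed there. A secondary point is that the identities $\sup_{p'}\mathcal L(u,p')=f(u)$ and $\inf_{u'}\mathcal L(u',p)=d(p)$ need $F$ proper, lsc and convex, which holds under the hypotheses of Theorem~\ref{thm:pdhg_convergence}.
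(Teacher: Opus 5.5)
Your proof is correct and follows the paper's argument: you extract $-K^\top p^\star\in\partial G(u^\star)$ from the saddle condition, combine strong convexity with the saddle inequality in $p$, bound the result using $\sup_q\mathcal L(u,q)=f(u)$ and $\inf_v\mathcal L(v,p)=d(p)$, and then apply Theorem~\ref{thm:pdhg_convergence} at the ergodic averages. You also prove the strong convexity and uniqueness claim via convexity of $f-\tfrac{\mu}{2}\|\cdot\|_F^2$, which the paper's proof skips; your qualification remark is unnecessary, since the inclusion $\partial f(u)\supseteq\partial G(u)+K^\top\partial F(Ku)$ holds unconditionally (it is the reverse inclusion that needs a qualification).
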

\begin{proof}[Proof of \Cref{cor:strong_convexity_gap_to_distance}]
	Let $(u^\star,p^\star)$ be any saddle point. By the saddle-point/KKT condition,
	$$
		-K^\top p^\star \in \partial G(u^\star).
	$$
	Applying strong convexity of $G$ at $u^\star$ with subgradient $-K^\top p^\star$ gives
	$$
		G(u)-G(u^\star)
		\ge
		\langle -K^\top p^\star,\;u-u^\star\rangle
		+
		\frac{\mu}{2}\|u-u^\star\|_F^2.
	$$
	Therefore,
	\begin{align*}
		\mathcal L(u,p^\star)-\mathcal L(u^\star,p^\star)
		  & =
		G(u)-G(u^\star)+\langle K(u-u^\star),p^\star\rangle \\
		  & \ge
		\frac{\mu}{2}\|u-u^\star\|_F^2.
	\end{align*}
	Since $p^\star$ maximizes $\mathcal L(u^\star,p)$, we also have
	$$
		\mathcal L(u^\star,p^\star)\ge \mathcal L(u^\star,p)
		\qquad \forall p,
	$$
	so
	$$
		\frac{\mu}{2}\|u-u^\star\|_F^2
		\le
		\mathcal L(u,p^\star)-\mathcal L(u^\star,p).
	$$
	For the upper bound, use
	$$
		\mathcal L(u,p^\star)\le \sup_q \mathcal L(u,q)=f(u),
		\qquad
		\mathcal L(u^\star,p)\ge \inf_v \mathcal L(v,p)=d(p),
	$$
	which implies
	$$
		\mathcal L(u,p^\star)-\mathcal L(u^\star,p)
		\le
		f(u)-d(p)
		=
		\text{Gap}(u,p).
	$$
	Finally, apply Theorem~\ref{thm:pdhg_convergence} with
	$u=\bar u^{(N)}$ and $p=\bar p^{(N)}$.
\end{proof}

\subsection{Proof of universality for \name} \label{sec:method_universality}

We first demonstrate that the message passing layer has the capacity to exactly recover one iteration of the PDHG algorithm, and subsequently inherits certain finite-layer convergence guarantees implied by Theorem~\ref{thm:pdhg_convergence}. This notion of algorithmic alignment has been previously shown in related work~\cite{he2025pdnar} to be critical for generalization in NAR tasks.
\begin{lemma}[Representation]\label{lem:representation}
    Let $\mathcal{A}_\text{PDHG}(\sigma, \tau)$ denote the operator defining one iteration of the preconditioned PDHG algorithm with fixed step sizes $\sigma, \tau > 0$. Assume the preconditioners have the form $S = \tau^{-1}I$ and $T = \sigma^{-1} I$. There exists a parameter configuration $\theta^* = \{A_1, A_2, B_1, B_2, B_3\}$ and a specific activation function for the message passing layers $\Phi_\theta$ such that, if initialized with $h^{(0)} = u^{(0)}$ and $e^{(0)} = p^{(0)}$, the layer outputs exactly recover PDHG iterates:
    $$
        h^{(\ell)} = u^{(\ell)}, \quad e^{(\ell)} = p^{(\ell)} \quad \forall \ell \geq 1.
    $$
\end{lemma}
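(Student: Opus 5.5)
The proof is by induction on $\ell$: we choose $\theta^*$ so that one layer $\Phi_{\theta^*}$ applies exactly the map $\mathcal{A}_\text{PDHG}(\sigma,\tau)$ to the state it receives. First we write out one PDHG step for the stated preconditioners. Take $s=t=1$ with $S=\tau^{-1}I$ and $T=\sigma^{-1}I$. Then $st=1>\|T^{-1/2}KS^{-1/2}\|^2=\sigma\tau\|K\|^2$ is the usual condition $\sigma\tau\|K\|^2<1$. The argmin problems \eqref{eq:pdhg-u} and \eqref{eq:pdhg-p} become proximal steps:
$$u^{(k+1)}=\operatorname{prox}_{\tau G}\bigl(u^{(k)}-\tau K^\top p^{(k)}\bigr),\qquad p^{(k+1)}=\operatorname{prox}_{\sigma F^*}\bigl(p^{(k)}+\sigma K(2u^{(k+1)}-u^{(k)})\bigr).$$
The layer, however, updates the edges first and then the nodes. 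So we regroup the iteration into the equivalent dual-first Chambolle--Pock form. Write $\bar u^{(k)}=2u^{(k)}-u^{(k-1)}$. One step is then: an edge step $p\mapsto\operatorname{prox}_{\sigma F^*}(p+\sigma K\bar u)$, followed by a node step $u\mapsto \operatorname{prox}_{\tau G}(u-\tau K^\top p_{\text{new}})$. This changes only the indexing, and we define $\mathcal{A}_\text{PDHG}$ in this form, consistent with Theorem~\ref{thm:pdhg_convergence}.

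Next we match the layer term by term. The edge step needs $2u^{(\ell)}-u^{(\ell-1)}$, but a node only carries its current state. So we enlarge the latent node state to the pair $h^{(\ell)}_j=(u^{(\ell)}_j,u^{(\ell-1)}_j)$; at initialization we set $u^{(-1)}=u^{(0)}$, so the first extrapolation is just $u^{(0)}$. With this state we choose:
\begin{itemize}
\item $A_1=I$, so the previous dual is kept.
\item $A_2=\sigma[\,2I\;\; -I\,]$, so $f^{\text{edge}}_{\text{agg}}$ returns $\sigma\sqrt{w_{ij}}(\bar u_i-\bar u_j)=\sigma(K\bar u)_e$.
\item $\mathsf{NORMALIZE}=\operatorname{prox}_{\sigma F^*}$, which is already built into the architecture.
\end{itemize}
These give $e^{(\ell+1)}=p^{(\ell+1)}$ exactly. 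The edge features must also be stored with the same orientation as $K$. Then the signed sum $\sum_{i\in N(j)}e_{(i,j)}$, weighted by $\sqrt{w}$ (absorbed into the edge state or into $B_3$), equals $(K^\top e)_j$.

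For the node update we take:
\begin{itemize}
\item $B_1$ mapping $(u,u_{\text{prev}})$ to $(u,u)$.
\item $B_3=-\tau$, acting on the first block only.
\item $B_2$ passing the input feature $(x_0)_j=x_j$ into an auxiliary slot.
\end{itemize}
The "specific activation" $\varphi$ then acts blockwise. On the first block it applies $\operatorname{prox}_{\tau G_j}(\cdot\,;x_j)$, which is well defined nodewise because $G$ is separable over vertices. On the second block it is the identity, which stores $u^{(\ell)}$ as the new "previous" iterate. For convex clustering this prox is the affine map $v\mapsto (v+\tau x_j)/(1+\tau)$, so $\varphi$ is even linear in $(v,x_j)$. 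This matches the explicit message-passing updates derived in the Remark of Section~\ref{subsec:pdhg-convex-clustering}. With these choices, $h^{(\ell+1)}=(u^{(\ell+1)},u^{(\ell)})$.

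The induction itself is then immediate. Assume $h^{(\ell)}=(u^{(\ell)},u^{(\ell-1)})$ and $e^{(\ell)}=p^{(\ell)}$. The edge formula gives $e^{(\ell+1)}=p^{(\ell+1)}$, and substituting this into the node formula gives $h^{(\ell+1)}$ as required. Reading off the first block shows $h^{(\ell)}=u^{(\ell)}$ and $e^{(\ell)}=p^{(\ell)}$ for all $\ell\ge1$.

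I expect the main obstacle to be reconciling the two orderings. The paper's PDHG updates the primal first and extrapolates afterwards, while $\Phi_\theta$ updates edges from current node differences. I plan to resolve this with the doubled node state and the regrouping above, and to state explicitly that the latent width doubles. A secondary point is to say precisely what "specific activation" means: $\varphi$ is the separable nodewise prox of $G$, which must be allowed to depend on $x_0$, and the input $B_2x_0$ provides exactly that dependence.
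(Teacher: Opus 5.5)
Your construction is correct in substance, but it takes a different and more careful route than the paper. The paper's proof just sets $A_1=I$, $A_2=\sigma I$, $\mathsf{NORMALIZE}=\operatorname{Proj}_{B(\lambda)}$, $B_1=\frac{1}{1+\tau}I$, $B_2=\frac{\tau}{1+\tau}I$, $B_3=-\frac{\tau}{1+\tau}I$, with an implicitly identity node activation. It hard-codes the convex-clustering prox $v\mapsto(v+\tau x_j)/(1+\tau)$ into the linear maps and keeps latent width $d$ with literally $h^{(0)}=u^{(0)}$. However, its edge step is fed the current difference $h^{(\ell)}_i-h^{(\ell)}_j$. What it reproduces exactly is therefore the non-extrapolated ($\theta=0$, Arrow--Hurwicz) iteration, not \eqref{eq:pdhg-u}--\eqref{eq:pdhg-p}; the ordering/extrapolation mismatch you identify is not addressed there.

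Your doubled node state with $A_2=\sigma[\,2I\;\;-I\,]$, together with the dual-first regrouping, supplies exactly the missing $K(2u^{(\ell)}-u^{(\ell-1)})$. The blockwise prox activation then handles any node-separable $G$. For convex clustering, folding the affine $\varphi$ into the $B$'s recovers the paper's constants on the first block. The cost is the doubled width, reading $h^{(\ell)}=u^{(\ell)}$ off the first block, and committing to the dual-first form. Note that from the same $(u^{(0)},p^{(0)})$ with $u^{(-1)}=u^{(0)}$, your sequence is not the primal-first sequence of \eqref{eq:pdhg-u}--\eqref{eq:pdhg-p}; that one is reproduced by initializing $h^{(0)}=(u^{(1)},u^{(0)})$. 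So this is a change of initialization convention, not merely of indexing. It is harmless, since the ergodic $O(1/N)$ bound also holds for the dual-first Chambolle--Pock form. What you gain is an exact match with the extrapolated scheme, which is what Theorem~\ref{thm:pdhg_convergence}, and hence Theorem~\ref{thm:universality}, actually needs.

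One step does not go through as written: the $\sqrt{w}$ weighting in the node aggregation. The layer uses the unweighted sum $\sum_{i\in N(j)}e_{(i,j)}$, so it computes $\nabla^\top e$, not $K^\top e=\nabla^\top\operatorname{diag}(\sqrt{w})\,e$.
\begin{itemize}
\item This cannot be absorbed into $B_3$, which is a single $d\times d$ map applied after summing over neighbors.
\item Storing $q_e=\sqrt{w_e}\,p_e$ as the edge state breaks your edge step. The update would then have to form $q_e+\sigma w_e(\bar u_i-\bar u_j)$, i.e.\ a factor $w_e$ rather than the built-in $\sqrt{w_e}$ on the node differences, plus a per-edge rescaling of the prox. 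The shared maps $A_1,A_2$ cannot produce this.
\end{itemize}
There are two ways to repair it. You can interpret the node aggregation as $\sum_i\sqrt{w_{ij}}\,e_{(i,j)}$, as the paper's formula for $(K^\top p)_i$ implicitly does; its proof makes the same silent assumption. Alternatively, widen the edge state to $(p_e,\sqrt{w_e}\,p_e)$ and let the already edge-dependent $\mathsf{NORMALIZE}$ output $\bigl(\operatorname{prox}_{\sigma F_e^*}(y),\sqrt{w_e}\operatorname{prox}_{\sigma F_e^*}(y)\bigr)$, with $B_3$ reading the second channel. For unweighted graphs the issue disappears.
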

\begin{proof}
    We proceed by constructively matching the update equations of the PDHG algorithm with the architecture of the proposed GNN layer. In particular, we observe that the GNN equations exactly match the PDHG updates under the following configuration $\theta^*$:
    \paragraph{Edge Update: } Set $A_1 = I$, $A_2 = \sigma \cdot I$, and $\phi_{\text{edge}} = \text{Proj}_{B(\lambda)}(\cdot)$.
    \paragraph{Node Update: } Set $B_1 = \frac{1}{1 + \tau}I$, $B_2 = \frac{\tau}{1 + \tau}I$, $B_3 = -\frac{\tau}{1+\tau}I$. \\

    \noindent Under this configuration, $\theta^*$, the outputs $(h^{(\ell + 1)} , e^{(\ell + 1)}) = (u^{(\ell + 1)}, p^{(\ell + 1)})$.
\end{proof}
As a consequence of the above lemma, PDHG-derived message passing inherits the approximation and convergence guarantees of the primal--dual approximation algorithm it learns, as detailed in Theorem~\ref{thm:universality}. 
\begin{proof}[Proof of \Cref{thm:universality}]
    The proof follows from the representation lemma, Lemma~\ref{lem:representation} and standard results on the convergence rate of the PDHG algorithm. 

    If $\tau\sigma \lVert K \rVert^2 < 1$, the ergodic average of the iterates $(\bar{u}_L, \bar{p}_L) = \frac{1}{L} \sum_{k=1}^L (u_k, p_k)$ satisfies the following bound on the primal--dual gap:
    \begin{equation}
        \text{Gap}(\bar{u}_L, \bar{p}_L) \leq \frac{1}{L} \left(\frac{\lVert u^{(0)} - u^*\rVert^2}{2\tau} + \frac{\lVert p^{(0)} - p^*\rVert^2}{2\sigma}\right),
    \end{equation}
    where $(u^*, p^*)$ is a saddle point of the problem. Let $R^2 =\frac{\lVert u^{(0)} - u^*\rVert^2}{2\tau} + \frac{\lVert p^{(0)} - p^*\rVert^2}{2\sigma}$. The bound becomes
    $$
        \text{Gap}_L \leq \frac{R^2}{L}.
    $$
    To ensure the Gap is at most $\epsilon$, we require
    $
        \frac{R^2}{L} \leq \epsilon \implies L \geq \frac{R^2}{\epsilon}.
    $
    Choose $\tau=\sigma=\frac{1}{1+\lVert K \rVert^2}$. Then $\tau\sigma \lVert K\rVert^2<1$, and $R^2$ scales with $\lVert K \rVert^2$. Thus, there exists a constant $C$ depending only on initialization such that if 
    $$
        L = \frac{C\cdot \lVert K \rVert^2}{\epsilon}
    $$
    the network achieves $\epsilon$-optimality in the primal--dual gap.
\end{proof}

\subsection{Analysis of linearized \name}

\begin{proof}[Proof of \Cref{prop:memorylessdynamics}]
    We construct a specific parameterization. Consider the memoryless regime. Let $A_1 = 0$, and for simplicity let $B_1 = I_d$. Write down the error matrix using the terms in Remark~\ref{rem:linear_regime}:
    $$
        \mathcal{E}^{(\ell+1)} = \mathcal{E}^{(\ell)} + K^\top (K\mathcal{E}^{(\ell)}A_2^\top )B_3^\top  = \mathcal{E}^{(\ell)} + \Delta \mathcal{E}^{(\ell)} W
    $$
    Where $W = A_2^\top B_3^\top \in \mathbb{R}^{d\times d}$. Choose $W = -\frac{1}{\lambda_k}I_d$ so
    $$
        \mathcal{E}^{(\ell+1)} = \mathcal{E}^{(\ell)} - \Delta\mathcal{E}^{(\ell)} \frac{1}{\lambda_k}I_d
    $$
    Left multiplying by $\psi_k^\top$ yields
    \begin{align*}
        \psi_k^\top\mathcal{E}^{(\ell+1)} &= \psi_k^\top\mathcal{E}^{(\ell)} - \psi_k^\top\Delta\mathcal{E}^{(\ell)} \frac{1}{\lambda_k}I_d \\
        &= \psi_k^\top\mathcal{E}^{(\ell)} - \psi_k^\top\mathcal{E}^{(\ell)}  = 0
    \end{align*}    
    Since $A_2$ and $B_3$ are learnable $d\times d$ matrices, we can realize $W = - \frac{1}{\lambda_k}I_d$ by e.g. $A_2 = I_d$, $B_3 = - \frac{1}{\lambda_k}I_d$.    
\end{proof}

\begin{proof}[Proof of \Cref{prop:memorydynamics}]
    From the linearized update equations with $A_1 = \gamma I_d$ and $B_1 = I_d$, we have:
    \begin{align}
        e^{(\ell+1)} &= \gamma e^{(\ell)} + K \mathcal{E}^{(\ell)} A_2^\top \label{eq:mom_edge} \\
        \mathcal{E}^{(\ell+1)} &= \mathcal{E}^{(\ell)} + K^\top e^{(\ell+1)} B_3^\top \label{eq:mom_node}
    \end{align}
    From \eqref{eq:mom_node}, we isolate the term $K^\top e^{(\ell+1)} B_3^\top = \mathcal{E}^{(\ell+1)} - \mathcal{E}^{(\ell)}$. Shifting the index back by one yields $K^\top e^{(\ell)} B_3^\top = \mathcal{E}^{(\ell)} - \mathcal{E}^{(\ell-1)}$.
    Multiply \eqref{eq:mom_edge} by $K^\top$ on the left and $B_3^\top$ on the right:
    $$
        K^\top e^{(\ell+1)} B_3^\top = \gamma (K^\top e^{(\ell)} B_3^\top) + K^\top (K \mathcal{E}^{(\ell)} A_2^\top) B_3^\top.
    $$
    Substituting the node difference expressions derived above:
    $$
        (\mathcal{E}^{(\ell+1)} - \mathcal{E}^{(\ell)}) = \gamma (\mathcal{E}^{(\ell)} - \mathcal{E}^{(\ell-1)}) + \Delta \mathcal{E}^{(\ell)} (A_2^\top B_3^\top).
    $$
    Define $M = -(A_2^\top B_3^\top)$. We recover the Polyak Heavy Ball update:
    $$
        \mathcal{E}^{(\ell+1)} = \mathcal{E}^{(\ell)} + \gamma (\mathcal{E}^{(\ell)} - \mathcal{E}^{(\ell-1)}) - \Delta \mathcal{E}^{(\ell)} M.
    $$    
\end{proof}

\begin{proof}[Proof of \Cref{cor:heavyball}]
    Consider the simplification $M = -A_2^\top B_3^\top = \eta I_d$. Then,
    $$
        \mathcal{E}^{(\ell+1)} = \mathcal{E}^{(\ell)} + \gamma (\mathcal{E}^{(\ell)} - \mathcal{E}^{(\ell-1)}) - \eta \Delta \mathcal{E}^{(\ell)}.
    $$
    Let $\Delta \psi_k = \lambda_k \psi_k$ such that $\lVert \psi_k \rVert_2 = 1$. Define the coefficient
    $$
        c_k^{(\ell)} = \psi_k^\top \mathcal{E}^{(\ell)} \in \mathbb{R}^d
    $$
    Left-multiply the recursion by $\psi_k^\top$:
    \begin{align*}
        c_k^{(\ell + 1)} &= c_k^{(\ell)} + \gamma (c_k^{(\ell)} - c_k^{(\ell - 1)}) - \eta \psi_k^\top \Delta \mathcal{E}^{(\ell)} \\
        &= c_k^{(\ell)} + \gamma (c_k^{(\ell)} - c_k^{(\ell - 1)}) - \eta \lambda_k \psi_k^\top \mathcal{E}^{(\ell)} \\
        &= (1 + \gamma - \eta \lambda_k)c_k^{(\ell)} - \gamma c_k^{(\ell-1)}
    \end{align*}
    Equivalently, each coordinate of $c_k^{(\ell)}$ satisfies
    $$
        c_{k,j}^{(\ell + 1)} = (1 + \gamma - \eta \lambda_k)c_{k,j}^{(\ell)} - \gamma c_{k,j}^{(\ell-1)}
    $$
    This recurrence has characteristic polynomial:% (see \cite{}):
    \begin{equation}
        z^2 - (1 + \gamma - \eta \lambda_k)z  + \gamma = 0.
    \end{equation}
    Finally, \cite{Polyak1964SomeMO} shows that choosing 
    $$ 
        \eta^\star=\frac{4}{(\sqrt{\lambda_{\max}}+\sqrt{\lambda_{\min}})^2}, \quad \gamma^\star=\left(\frac{\sqrt{\lambda_{\max}}-\sqrt{\lambda_{\min}}}{\sqrt{\lambda_{\max}}+\sqrt{\lambda_{\min}}}\right)^2 
    $$ 
    yields a worst-case error which decays at the accelerated rate 
    $$ 
        \max_k \|c_k^{(\ell)}\|_2 \le C \left(\frac{\sqrt{\kappa}-1}{\sqrt{\kappa}+1}\right)^\ell 
    $$ 
    for some constant $C$.
\end{proof}

In the following Corollary, we tighten the analysis to consider the more general case where $M$ is symmetric and positive definite. 

\begin{corollary}
\label{cor:heavyball_joint_kappa}
    Assume the setting of Proposition~\ref{prop:memorydynamics}, and suppose in addition that
    \(M \in \mathbb R^{d\times d}\) is symmetric positive definite. Let
    $
        \Delta \psi_k = \lambda_k \psi_k,
    $
    and
    $
        \lambda_1 < \lambda_2 \le \cdots \le \lambda_{|V|},
    $
    and let
    $
        M q_j = \mu_j q_j,
    $
    and
    $    \mu_j \ge 0, j=1,\dots,d.
    $
    Expand the error in the joint eigenbasis:
    $$
        \mathcal E^{(\ell)}
        =
        \sum_{k=1}^{|V|}
        \sum_{j=1}^{d}
        c_{k,j}^{(\ell)} \psi_k q_j^\top .
    $$
    Then each joint mode satisfies the scalar recurrence
    $$
        c_{k,j}^{(\ell+1)}
        =
        \bigl(1+\gamma-\lambda_k\mu_j\bigr)c_{k,j}^{(\ell)}
        -
        \gamma c_{k,j}^{(\ell-1)}.
    $$
    Define the joint condition number
    $
        \kappa_{\mathrm{joint}} := \frac{\max_{k,j} \lambda_k\mu_j}{\min_{k, j} \lambda_k\mu_j}.
    $
    If
    $
        \gamma^\star
        :=
        \left(
            \frac{\sqrt{\kappa_{\mathrm{joint}}}-1}
                 {\sqrt{\kappa_{\mathrm{joint}}}+1}
        \right)^2.
    $
    Then there exists a constant \(C>0\), such that
    $$
        \max_{\substack{k,j}}
        |c_{k,j}^{(\ell)}|
        \le
        C \left(\frac{\sqrt{\kappa_{\mathrm{joint}}}-1}
             {\sqrt{\kappa_{\mathrm{joint}}}+1} \right)^\ell.
    $$
\end{corollary}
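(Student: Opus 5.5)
The plan is to diagonalize the recurrence of Proposition~\ref{prop:memorydynamics} in a joint eigenbasis of $\Delta$ and $M$, reduce it to decoupled scalar two-term recurrences, and then apply Polyak's spectral argument (as in the proof of Corollary~\ref{cor:heavyball}) with the scalar step replaced by the products $\lambda_k\mu_j$.

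\textbf{Step 1: decoupling.} Both $\Delta$ and $M$ are symmetric, so $\{\psi_k\}$ and $\{q_j\}$ are orthonormal bases. Hence $\{\psi_k q_j^\top\}$ is an orthonormal basis of $\mathbb R^{|V|\times d}$ under the Frobenius inner product, and $c_{k,j}^{(\ell)}=\psi_k^\top\mathcal E^{(\ell)}q_j$. Multiply the recurrence
\[
\mathcal E^{(\ell+1)}=\mathcal E^{(\ell)}+\gamma(\mathcal E^{(\ell)}-\mathcal E^{(\ell-1)})-\Delta\mathcal E^{(\ell)}M
\]
on the left by $\psi_k^\top$ and on the right by $q_j$. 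Using $\psi_k^\top\Delta=\lambda_k\psi_k^\top$ and $Mq_j=\mu_jq_j$ gives directly
\[
c_{k,j}^{(\ell+1)}=(1+\gamma-\lambda_k\mu_j)\,c_{k,j}^{(\ell)}-\gamma\, c_{k,j}^{(\ell-1)}.
\]

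\textbf{Step 2: spectral bound for one mode.} Write $x=\lambda_k\mu_j$. The characteristic polynomial $z^2-(1+\gamma-x)z+\gamma$ has roots of product $\gamma$. The roots are complex conjugate, hence of modulus exactly $\sqrt\gamma$, whenever $(1+\gamma-x)^2\le 4\gamma$. This condition is equivalent to
\[
x\in\bigl[(1-\sqrt\gamma)^2,(1+\sqrt\gamma)^2\bigr].
\]
The ratio of the endpoints of this interval is $\bigl((1+\sqrt\gamma)/(1-\sqrt\gamma)\bigr)^2$. With $\sqrt{\gamma^\star}=(\sqrt{\kappa_{\mathrm{joint}}}-1)/(\sqrt{\kappa_{\mathrm{joint}}}+1)$, this ratio equals exactly $\kappa_{\mathrm{joint}}$.

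\textbf{Step 3: placing all modes in the interval.} Since $M$ is learnable, I would rescale it by a positive scalar; this leaves $\kappa_{\mathrm{joint}}$ unchanged. The scale is chosen so that $\min_{k,j}\lambda_k\mu_j=(1-\sqrt{\gamma^\star})^2$. By Step~2, every product $\lambda_k\mu_j$ then lies in the admissible interval. This is exactly Polyak's choice $\eta^\star$, absorbed into $M$.

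Two caveats need care here.
\begin{itemize}
\item The Laplacian has a zero eigenvalue, so I will take the minima over nonzero $\lambda_k$. The $\lambda=0$ modes either vanish because the error is orthogonal to $\ker\Delta$, or are excluded, consistent with the strict inequality $\lambda_1<\lambda_2$ in the hypothesis.
\item The $\mu_j$ must be strictly positive, which follows from $M\succ0$.
\end{itemize}

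\textbf{Step 4: conclude.} For each mode, solve the recurrence explicitly from $c^{(0)}_{k,j}$ and $c^{(1)}_{k,j}$.
\begin{itemize}
\item For distinct roots $z_\pm$ of modulus $\sqrt{\gamma^\star}$, we get $|c_{k,j}^{(\ell)}|\le C_{k,j}(\sqrt{\gamma^\star})^\ell$.
\item Taking $C=\max_{k,j}C_{k,j}$ over the finitely many modes then gives the claim.
\end{itemize}

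\textbf{Main obstacle.} The hardest point is the boundary modes $x=(1\pm\sqrt\gamma)^2$. There the roots coincide and the solution behaves like $(a+b\ell)\gamma^{\ell/2}$, so a polynomial factor $\ell$ appears. Also, $C_{k,j}$ involves $1/|z_+-z_-|$, which degenerates near the endpoints.

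I would first try the companion-matrix formulation. For $x$ in the closed interval, the $2\times2$ companion matrix is normal up to a similarity whose conditioning is uniform over the finitely many actual values $x=\lambda_k\mu_j$. A uniform $C$ therefore exists because there are finitely many modes.

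If some $\lambda_k\mu_j$ hits an endpoint exactly, I would shrink the interval slightly, i.e.\ use $\kappa_{\mathrm{joint}}+\delta$. Alternatively, I would accept a constant $C$ that absorbs $\max_\ell \ell\,(1-\epsilon)^\ell$ after an arbitrarily small loss in the rate. I would flag this as the only non-routine part.
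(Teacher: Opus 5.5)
Your Steps 1--3 follow the paper's route (decoupling in the joint eigenbasis, then Polyak's spectral argument) and are correct. You also make explicit two things the paper's proof uses silently. First, $M$ must be rescaled: for an arbitrary SPD $M$ the claim fails outright, e.g.\ a large $M$ makes the top mode diverge. Second, $\ker\Delta$ must be excluded.

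The genuine gap is in Step 4, and it is not a contingency. Write $\rho^\star:=\sqrt{\gamma^\star}$. The endpoint ratio of $[(1-\rho^\star)^2,(1+\rho^\star)^2]$ equals $\kappa_{\mathrm{joint}}$ exactly, so your Step 3 \emph{necessarily} places $\min_{k,j}\lambda_k\mu_j$ and $\max_{k,j}\lambda_k\mu_j$ on the two endpoints. With $\gamma=\gamma^\star$ no other scaling is possible either, because any mode outside the closed interval has a real characteristic root of modulus $>\rho^\star$. At the endpoints the companion matrix $\begin{pmatrix}\pm2\rho^\star & -\gamma^\star\\ 1 & 0\end{pmatrix}$ has the double eigenvalue $\pm\rho^\star$. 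Being a companion matrix (nonderogatory), it is then a nontrivial Jordan block, not similar to any normal matrix. So your ``uniformly conditioned similarity'' argument breaks exactly on the extremal modes, which are always present.

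Concretely, an extremal mode solves $c^{(\ell)}=(a+b\ell)(\pm\rho^\star)^\ell$ with $a=c^{(0)}$ and $b=\pm c^{(1)}/\rho^\star-c^{(0)}$. For instance, with $e^{(0)}=0$ the proof of Proposition~\ref{prop:memorydynamics} gives $c^{(1)}=(1-\lambda_k\mu_j)c^{(0)}$, hence $b=(1\mp\rho^\star)c^{(0)}\neq 0$. Thus $|c^{(\ell)}|/(\rho^\star)^\ell$ grows linearly in $\ell$, and no $\ell$-independent $C$ exists.

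The corollary as stated (exactly $\gamma^\star$, exactly the rate $\rho^\star$, constant $C$) is therefore false for generic initial data, and no argument can close your Step 4. What does hold, and what your fallbacks actually prove, is one of two weaker bounds:
\begin{itemize}
\item $\max_{k,j}|c^{(\ell)}_{k,j}|\le C(1+\ell)(\rho^\star)^\ell$;
\item $\max_{k,j}|c^{(\ell)}_{k,j}|\le C_\delta(\rho^\star+\delta)^\ell$ for every $\delta>0$.
\end{itemize}
This is all that a spectral-radius (asymptotic-factor) argument like Polyak's can give. The paper's proof hides the same defect in the step ``applying this uniformly yields $|c_{k,j}^{(\ell)}|\le C_{k,j}(\rho^\star)^\ell$''. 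State one of these corrected bounds as your conclusion. Do not present the $\kappa_{\mathrm{joint}}+\delta$ perturbation as a route to the claim as written, since it changes both $\gamma$ and the rate.
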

\begin{proof}
    By Proposition~\ref{prop:memorydynamics},
    $$
        \mathcal E^{(\ell+1)}
        =
        \mathcal E^{(\ell)}
        +
        \gamma\bigl(\mathcal E^{(\ell)}-\mathcal E^{(\ell-1)}\bigr)
        -
        \Delta \mathcal E^{(\ell)} M.
    $$
    Expand
    $$
        \mathcal E^{(\ell)}
        =
        \sum_{k=1}^{|V|}
        \sum_{j=1}^{d}
        c_{k,j}^{(\ell)} \psi_k q_j^\top,
    $$
    where \(\Delta\psi_k=\lambda_k\psi_k\) and \(Mq_j=\mu_j q_j\). Then
    $$
        \Delta(\psi_k q_j^\top)M
        =
        \lambda_k\mu_j\psi_k q_j^\top,
    $$
    and the recurrence decouples over all of the modes (see the derivation in the previous corollary):
    $$
        c_{k,j}^{(\ell+1)}
        =
        \bigl(1+\gamma-\lambda_k\mu_j\bigr)c_{k,j}^{(\ell)}
        -
        \gamma c_{k,j}^{(\ell-1)}.
    $$
    Thus the \((k,j)\)-th mode is governed by the heavy-ball recurrence. 
    For the scalar heavy-ball method on a quadratic, the optimal worst-case asymptotic factor is
    $$
        \rho^\star
        =
        \frac{\sqrt{\kappa_{\mathrm{joint}}}-1}
             {\sqrt{\kappa_{\mathrm{joint}}}+1},
        \qquad
        \kappa_{\mathrm{joint}} := \frac{\max_{k,j} \lambda_k\mu_j}{\min_{k, j} \lambda_k\mu_j}.
    $$
    attained by
    $$
        \gamma^\star
        =
        \left(
            \frac{\sqrt{\kappa_{\mathrm{joint}}}-1}
                 {\sqrt{\kappa_{\mathrm{joint}}}+1}
        \right)^2.
    $$
    Applying this uniformly yields
    $
        |c_{k,j}^{(\ell)}|
        \le
        C_{k,j}(\rho^\star)^\ell.
    $
\end{proof}
\begin{proof}[Proof of \Cref{cor:layerbound}]
    From Proposition~\ref{prop:memorydynamics}, there exists a parameter configuration such that the error dynamics follow the accelerated heavy-ball method. The asymptotic linear convergence rate $\rho$ of this recurrence is given by:
    $$
        \rho = \frac{\sqrt{\kappa} - 1}{\sqrt{\kappa} + 1} = 1 - \frac{2}{\sqrt{\kappa} + 1}.
    $$
    We need to show the following bound:
    $$
        \lVert \mathcal{E}^{(\ell)}\rVert_2 \leq O(\kappa) \lVert \mathcal{E}^{(0)} \rVert_2 
    $$

    Let $\epsilon > 0$. We want $\ell$ such that $\frac{\|\mathcal{E}^{(\ell)}\|}{\|\mathcal{E}^{(0)}\|} < \epsilon$. From the inequality shown above, we just need the minimum $\ell$ such that $\rho^\ell < \epsilon$. Since $\rho, \epsilon < 1$, $\ell \log (1/\rho) \geq \log (1/ \epsilon)$ and $\ell \geq \log (1/ \epsilon) \cdot \frac{1}{\log (1/\rho)}$.

    Therefore, for any iteration $\ell$ such that $\ell \geq \log (1/ \epsilon) \cdot \frac{1}{\log (1/\rho)}$,  $\frac{\|\mathcal{E}^{(\ell)}\|}{\|\mathcal{E}^{(0)}\|} < \epsilon$. By \Cref{lem:representation}, there is a parameter configuration of \name such that each layer implements one iteration of PDHG. Therefore,  $\ell = \log (1/ \epsilon) \cdot \frac{1}{\log (1/\rho)}$-layers of \name can implement $\ell$ iterations of PDHG, so we get that the error at the $\ell$-th layer of GNN is $\frac{\|\mathcal{E}^{(\ell)}\|}{\|\mathcal{E}^{(0)}\|} < \epsilon$.
\end{proof}

\end{document}